%% file: main.tex
\documentclass[letterpaper]{article}
\usepackage[preprint]{aaai2027}
\usepackage[hyphens]{url}
\usepackage{graphicx}
\usepackage{natbib}
\usepackage{caption}
\usepackage{amsmath,amssymb,amsthm}
\usepackage{booktabs}
\usepackage{multirow}
\usepackage{nicefrac}
\usepackage[table]{xcolor}

\newtheorem{arctheorem}{Theorem}
\newtheorem{arcproposition}[arctheorem]{Proposition}
\newtheorem{arclemma}[arctheorem]{Lemma}
\newtheorem{arccorollary}[arctheorem]{Corollary}
\theoremstyle{definition}
\newtheorem{arcdefinition}[arctheorem]{Definition}
\newtheorem{arcassumption}[arctheorem]{Assumption}
\theoremstyle{remark}
\newtheorem{arcremark}[arctheorem]{Remark}

\title{Addressable Recall Compaction for Long Context-Window Control in AI Agents}
\author{
    Thang Dang\textsuperscript{\rm 1}\equalcontrib\corresponding,
    Yuma Ichikawa\textsuperscript{\rm 2,\rm 3}\equalcontrib\corresponding,
    Sakina Fatima\textsuperscript{\rm 1},
    Koichi Shirahata\textsuperscript{\rm 2}
}
\affiliations{
    \textsuperscript{\rm 1}Fujitsu Research of America;
    \textsuperscript{\rm 2}Fujitsu Limited Japan;
    \textsuperscript{\rm 3}RIKEN Center for AIP\\
    tdang@fujitsu.com, ichikawa.yuma@fujitsu.com, sakina.fatima@fujitsu.com, k.shirahata@fujitsu.com
}

\begin{document}

\maketitle

\begin{abstract}
Long-horizon LLM agents accumulate reasoning traces, actions, and tool observations that can eventually exceed a model's fixed context window. Existing compaction methods address this limitation by discarding, summarizing, or retrieving earlier information, but they may remove task-critical details or fail to recover them reliably. We propose \textbf{ARC} (Addressable Recall Compaction), a context-management framework that separates archival storage from active-context presentation. ARC stores tool observations in an append-only, ID-addressable log and replaces older observations with compact citations when compaction is required. The agent can subsequently use these identifiers to request stored content without re-executing the corresponding tools or depending solely on similarity-based retrieval. We evaluate ARC using Qwen3-8B with a 16k context window and Qwen3-32B with a 32k context window. On the Needle-in-a-Haystack evaluation, ARC achieves an average exact-answer accuracy of 99.40\%, compared with 88.12\% for the best-performing baseline in our evaluation. ARC also reduces estimated serving time and HBM traffic under our hardware-cost model. On the LongBench-v2 Hard subset, ARC obtains an average accuracy of 29.97\%, compared with 28.25\% for the best-performing baseline. These results indicate that explicit, address-based recall can improve information retention and serving efficiency relative to the evaluated context-management baselines under the tested settings.
\end{abstract}

\input{introduction}

\input{related_works}
\input{background}
\input{proposed_method}
\input{experimental_setup}
\input{results}
\input{discussion_limittation}
\input{conclusion}

\bibliography{aaai2027}

\appendix
\begin{center}
    {\Large \textbf{Supplementary}}
\end{center}
\input{proof}

\input{ablation}
\input{cross_platform}

\end{document}

%% file: introduction.tex
\section{Introduction}
\label{sec:intro}

In recent years, the paradigm of Large Language Models (LLMs)~\cite{masterman2024landscape} has transitioned from passive text generation~\cite{li2024pre} to active, autonomous agency. By embedding an LLM within an interactive execution loop, modern AI agents~\cite{durante2024agent} can perceive environments, reason over objectives, and execute actions via external tools, APIs, or bash shells. At each discrete time step $t$, the agent reads the cumulative historical transcript of the trajectory, synthesizes an internal monologue, and emits a structured action. The underlying runtime environment executes this action and appends the resulting observation back into the transcript, preparing the agent for the subsequent turn. Because contemporary autoregressive language models remain fundamentally stateless between independent API calls or inference passes, this monotonically growing transcript serves as the agent’s singular, monolithic working memory.

However, this architecture introduces a severe engineering and algorithmic bottleneck: the context-window limit~\cite{tang2025beyond}. Every LLM is bounded by a strict maximum sequence length that it can process at once. Because the agentic trajectory at any given turn is a literal concatenation of all prior prompts, tool outputs, and environment responses, the transcript grows continuously and without bound as the agent keeps acting.

Once the cumulative transcript outgrows this maximum capacity, the system encounters an unrecoverable failure state: the LLM can no longer process the next step, effectively paralyzing the agent. For long-horizon planning~\cite{erdogan2025plan}, complex software engineering tasks, or open-ended web browsing, hitting this context wall is not a matter of if, but when rendering even modern ultra-large context windows insufficient.

\begin{figure*}[t]
\centering
\includegraphics[width=0.9\textwidth]{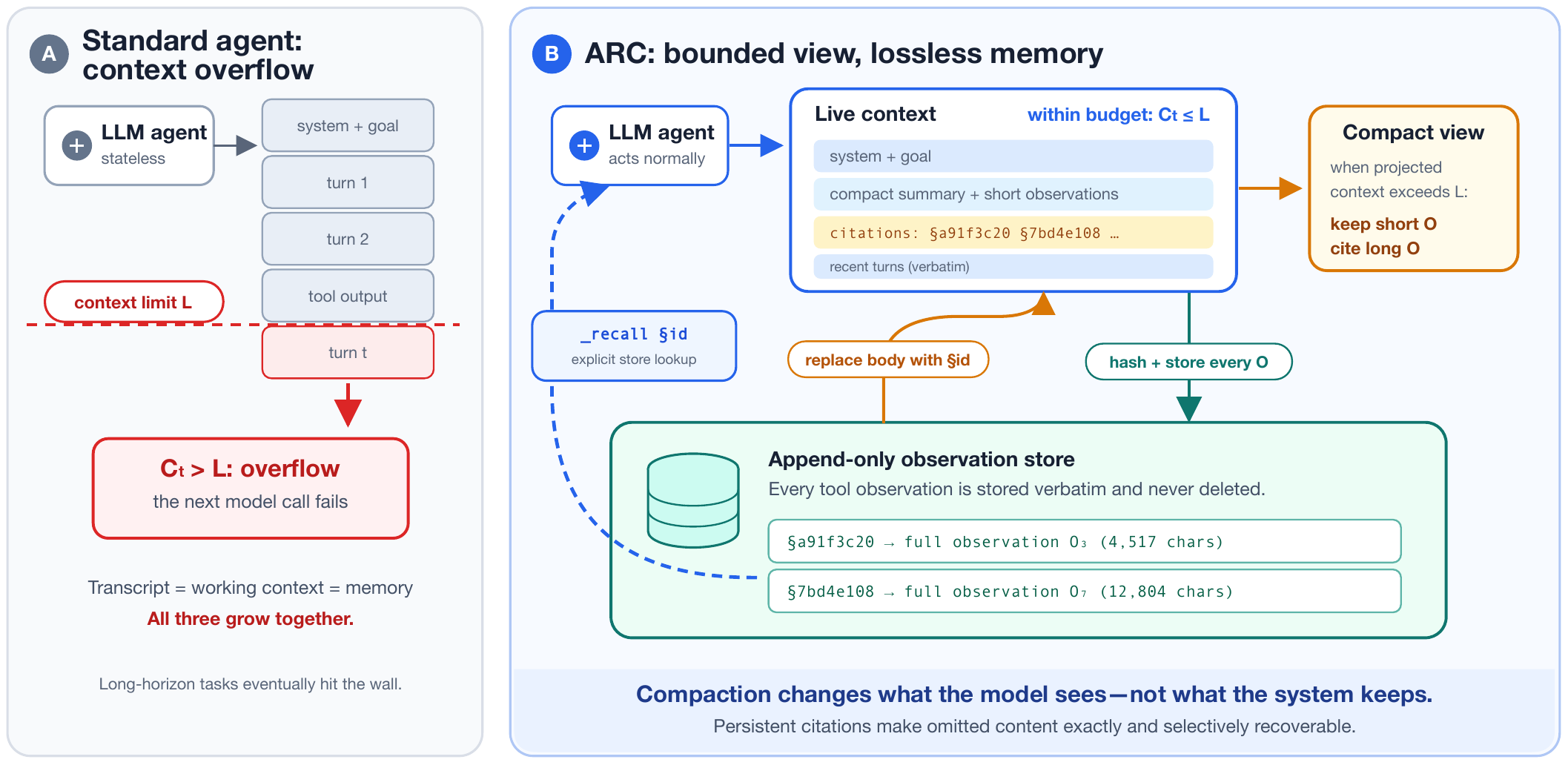}
\caption{Our Addressable Recall Compaction - ARC vs a standard agent context management.}
\label{fig:ARC_conceptual_workflow}
\end{figure*}

\begin{figure}[t]
\centering
\includegraphics[width=0.95\columnwidth]{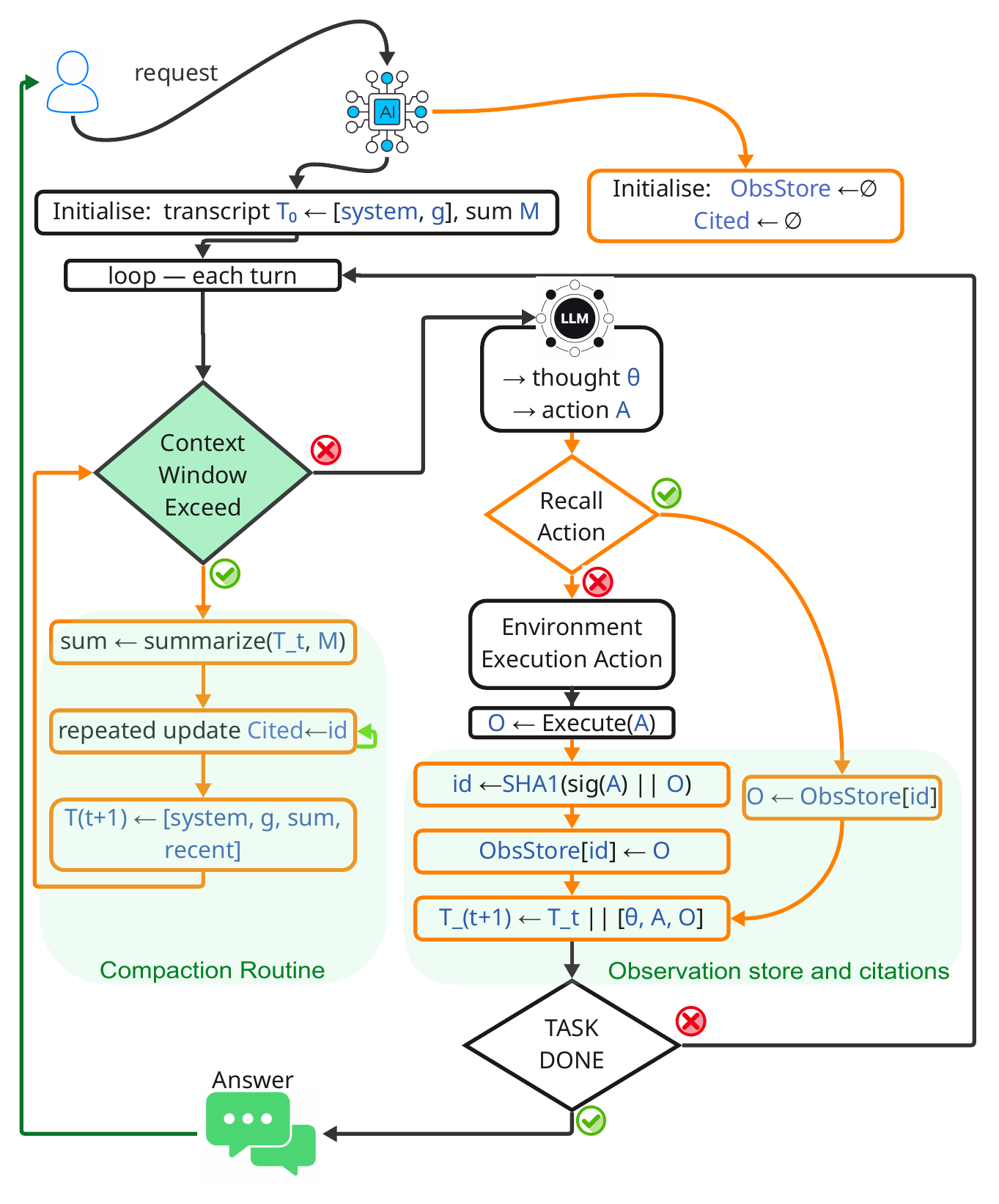}
\caption{Our Addressable Recall Compaction workflow.}
\label{fig:ARC_workflow}
\end{figure}

To prevent this catastrophic overflow, contemporary agent frameworks rely on compaction or compression~\cite{ichikawa2026onecomp}: a suite of lossy transformation techniques designed to compress an over-long transcript into a truncated form that fits comfortably within context-window length. The current state-of-the-art methods generally fall into four paradigms:

\begin{itemize}
    \item \textbf{Naive Truncation}~\cite{wang2026context}: Dropping the oldest $N$ turns entirely, assuming older context is irrelevant.
    \item \textbf{Hierarchical Summarization}~\cite{kang2510acon}: Periodically prompting the LLM to condense historical turns into high-level textual summaries.
    \item \textbf{State Folding}~\cite{fu2026vikingmem}: Mapping conversational histories into explicit, structured JSON or key-value state objects.
    \item \textbf{Retrieval-Augmented Generation (RAG)}~\cite{park2023generative}: Offloading historical context into an external vector database and retrieving relevant chunks via semantic similarity.
\end{itemize}

All four paradigms are fundamentally lossy: once history is pruned, paraphrased, or vectorized, exact recovery is no longer guaranteed. If the discarded content contains a critical \texttt{needle} such as a variable name, error string, or user constraint, the agent must either re-derive it at high cost or hallucinate a substitute, compounding errors over long trajectories.

We address this limitation by asking whether transcript compaction can remain strictly lossless while keeping the active prompt bounded. We introduce \textbf{ARC} (Addressable Recall Compaction) (Fig.~\ref{fig:ARC_conceptual_workflow}), which decouples archival storage from in-context representation. ARC maintains (1) an append-only \textbf{Addressable Store} that preserves historical turns and tool outputs verbatim, and (2) a \textbf{Bounded-Size Active View} that replaces older content with explicit pointers into that store. With recall tools that resolve these pointers on demand, ARC preserves access to any historical \texttt{needle} without exceeding the active context budget.

\subsubsection*{Summary of Contributions}
\begin{itemize}
    \item We characterize how lossy compaction fails in long-horizon LLM agent trajectories.
    \item We present ARC, a lossless compaction framework based on addressable pointers and on-demand recall.
    \item We show that ARC prevents context-overflow failures and outperforms summarization- and RAG-based baselines on long-context, tool-heavy tasks.
\end{itemize}

%% file: related_works.tex
\section{Related Works}
\label{sec:ARC_related}

Prior compaction methods differ in what they remove, when removal happens, and
whether discarded content can later be recovered. We organize them by this last
property, since it determines whether compaction is \emph{safe}: whether
task-critical information can survive the process.

\paragraph{Full-Context (No Compaction).}
Some agent systems simply append every turn to the transcript indefinitely, as
in ReAct and Reflexion \cite{yao2023react,shinn2023reflexion}. This preserves
perfect fidelity while context remains available, but it is not a true
compaction strategy: once $|T| > L$, the method fails.

\paragraph{Sliding Window.}
A common baseline keeps only the most recent $N$ turns and drops older ones
\cite{ding2026sliding}. This is simple and budget-compliant, but once a value
falls outside the window, it is lost permanently.

\paragraph{LLM Summary.}
Another approach replaces older turns with a natural-language summary produced
by the model \cite{packer2023memgpt}. This retains more semantic content than
truncation, but remains lossy: omitted or paraphrased details cannot be
recovered, and each compaction event incurs extra latency and cost.

\paragraph{Structured State.}
Instead of free text, prior work may compress history into a fixed-schema state
object such as JSON \cite{fu2026vikingmem}. Schemas can preserve anticipated
fields more reliably than open-ended summaries, but any unmodeled information
is still discarded irreversibly.

\paragraph{RAG Memory.}
Retrieval-based methods store past observations externally and re-inject
retrieved chunks into context when needed \cite{lewis2020retrieval}. Although
the store may retain everything, recall is only implicit: the agent cannot
request a specific prior item directly, and recovery depends on retrieval
quality.

\paragraph{Summary.}
Across these approaches, the key limitation is the absence of reliable,
addressable recall. Some methods discard information outright; others preserve
it only indirectly through summaries, schemas, or similarity-based retrieval.
ARC addresses this gap by storing observations verbatim under canonical
identifiers and replacing them in the live transcript with short citations,
allowing explicit recovery of exact prior content on demand.

%% file: background.tex
\section{Background}

A long-horizon LLM agent operates in a loop: at each turn it receives an
observation (a tool result, an error message, a user message), produces a
thought and an action, and appends all of it to a growing transcript
$T_t = (o_1, a_1, o_2, a_2, \ldots, o_t, a_t)$ that is re-sent to the model on every
subsequent call \cite{yao2023react}. The model itself is stateless between
calls; everything it ``remembers'' about the task is whatever fits in $T_t$.
 
This creates a hard constraint. Every model serves requests under a fixed
context budget $L$ (measured in tokens). Once $|T| > L$, the next call fails
outright, producing a context-window overflow, regardless of how important
the content being dropped is. Overflow is not a rare edge case: any task
that runs long enough (many tool calls, large tool outputs, extended
reasoning) will eventually exceed $L$, so a real agent must have
\emph{some} answer to this moment.
 
We call the general problem \textbf{compaction}: producing a shorter
transcript $T_t'$ from $T_t$ once $|T| > L$, such that the agent can continue
operating within budget. Every method surveyed in this paper is a compaction
strategy in this sense, and they differ along one central axis: what
happens to the information that no longer fits verbatim in $T_t'$. We say a
value is \textbf{recoverable} under a given strategy if there exists some
later point in the task at which the agent can retrieve that value's exact
original form; otherwise it is \textbf{lost}. A compaction strategy is
\emph{lossy} if it can produce $T'$ from which some value is unrecoverable by
construction (e.g., truncated away, paraphrased, or never stored outside the
live transcript). This distinction of lost versus recoverable, and if
recoverable, by what mechanism the agent actually retrieves the value, is
the organizing axis for the related work below, since it determines whether a
compaction event can silently destroy information the agent will later need,
independent of how fluent or efficient the compaction itself is.

To measure this directly, we adopt a needle-in-a-haystack~\cite{nelson2024needle, kamradt2023needle} protocol: a single random, unforgeable value (the needle) is introduced early in a task, the
agent is driven through one or more forced compaction events, and the task is
scored by whether the agent can still produce the needle's exact value when
asked. This isolates recoverability from general task competence, since a
method can compact skillfully and still fail this test if the mechanism it
uses provides no reliable path back to the needle once it has left the live
transcript.

%% file: proposed_method.tex
\section{Proposed Method}
\label{sec:proposed_method}

\subsection{Design Principle}
Every baseline in the Related Works Section conflates two decisions that need not be coupled: (1) what to keep, and (2) what to show the model right now. ARC separates them. It keeps \emph{everything}, always, in a content-addressed store; it decides what to show using the same length-bounded logic as the other baselines, but every omission leaves behind an explicit, dereferenceable pointer. Recovery is then a decision that the agent makes, not a probability that the retriever gets right.

\subsection{ARC Algorithm (Fig.~\ref{fig:ARC_workflow})}
\label{alg:proposed_method}

\textbf{Observation store and citations.}
For every normal action $A$, ARC stores the resulting observation $O$ under a stable id computed as $\text{Hash}(\text{signature}(A), O)$: concretely, SHA1 over the action signature, an ASCII unit-separator byte, and the observation. The visible id uses the first 8 hex characters and is extended in 4-character increments only if two different observations collide. Each \texttt{ObsStore} record keeps the full content, command signature, fixed head/tail previews, length metadata, return code, creation step, and recall bookkeeping. The store is append-only for the lifetime of the task.

When compaction removes a long observation from the visible transcript, ARC leaves a citation \texttt{§id} containing its fixed head preview, tail preview, metadata, and a \texttt{\_recall §id} hint. Observations shorter than $\rho$ remain verbatim. The set \texttt{Cited} enforces citation persistence: once an id has appeared as a recall handle, later compactions preserve that handle rather than re-summarizing or renaming it.

\textbf{Recall action.}
\texttt{\_recall §id} is intercepted by the agent framework before it reaches the environment. A valid recall injects \texttt{ObsStore[id]} back into the transcript; an invalid id returns a nearest-match suggestion instead of failing silently. Recalls are controlled by a per-step limit $r$ (default $r=2$), a total materialized-recall budget, and a per-recall maximum length; least-recently-used recalled bodies are evicted back to citation stubs when the budget is exceeded. A lighter \texttt{\_recall\_meta §id} action restores only citation-level metadata.

\textbf{Compaction routine.}
$\text{Summarize}$ is deterministic and does not call an LLM, unlike the LLM-summary method, this step adds no extra model invocation or latency. It truncates older thoughts, keeps short observations inline, replaces longer observations by citations, and retains the most recent turns verbatim. If the assembled summary still exceeds $M$, older already-cited entries are collapsed from full citation stubs to bare \texttt{§id} references. These operations change only the active view; the full bodies remain recoverable from \texttt{ObsStore}.

\textbf{Triggering.}
ARC supports both proactive and reactive triggering. In proactive mode, the agent estimates prompt length before each model call and compacts before overflow. In our real-LLM evaluation, we use the reactive path: compaction occurs after a context-window-overflow error, with fixed call indices used to isolate post-compaction recovery from stochastic overflow timing. Both modes invoke the same deterministic compaction routine and recall mechanism.

The key invariant is that after compaction, the visible transcript has a bounded size $K \le L$ independent of the number of elapsed turns, while \texttt{ObsStore} retains every observation at full fidelity. ARC, therefore, shrinks the model's view without shrinking the task's ground truth.

\subsection{System-Level Guarantee}
\label{sec:arc_informal_guarantee}

The guarantee we establish for ARC is a property of the memory mechanism, not of the agent that uses it. It ensures that any stored observation remains exactly recoverable through a valid address, but it does not ensure that the language model will choose the correct address or reason accurately once the content is retrieved. We therefore draw a deliberate distinction between address-conditioned recoverability, which the mechanism provides, and end-to-end task success, which the agent must additionally achieve. In the same spirit, we use persistent addressability to mean that every issued address remains valid in the external catalog for the lifetime of the task. This is a weaker and more realistic requirement than keeping all citations simultaneously visible in the active prompt, which, as we show in the appendix, is incompatible with a turn-independent context bound.

Let $L$ denote the usable input-token budget, defined as the context limit minus the capacity reserved for the model's maximum completion length. Within the active view, ARC assigns a fixed token budget to each of the five components: the immutable task prefix, the deterministic summary, the recent suffix, a single page of the external citation catalog, and the recalled content. Let $K$ be the sum of these budgets, and require $K \leq L$. Crucially, ARC never recovers a large observation through head--tail truncation, which would irreversibly discard interior tokens; instead, it returns the observation through exact, non-overlapping recall chunks of at most $q$ payload tokens each, ensuring that the full content is always reconstructible on demand.

\begin{arctheorem}[Informal ARC guarantee]
\label{thm:arc_informal}
    Suppose ARC (i) assigns injective, fixed-width occurrence identifiers up to a declared task capacity, (ii) stores every tool observation verbatim before any compaction can remove it, (iii) keeps the complete citation catalog in external memory while materializing only a token-bounded page in the prompt, and (iv) enforces the active-view budget $K \leq L$ before every model invocation. Then, at every turn, the model input contains at most $K$ tokens, independently of how many turns have elapsed. Moreover, for every stored observation $O$ with valid identifier $\mathrm{id}$, ARC returns the exact archived token sequence without re-executing the original action; if $|O| > q$, the full sequence is reconstructed from exactly $\left\lceil \nicefrac{|O|}{q} \right\rceil$ non-overlapping recall responses, each carrying at most $q$ payload tokens. Consequently, ARC is observation-lossless at the system level while maintaining a uniformly bounded active view.
\end{arctheorem}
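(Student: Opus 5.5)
The plan is to prove the two claims separately, the budget bound and then exact recoverability, by induction over the agent's turns. At each step I track an invariant on the pair consisting of the active view $V_t$ and the external state $(\texttt{ObsStore}_t, \mathcal{C}_t)$, where $\mathcal{C}_t$ is the full citation catalog.

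\textbf{Bounded view.} I will decompose every model input as $V_t = P \oplus S_t \oplus R_t \oplus G_t \oplus X_t$: the immutable prefix, the deterministic summary, the recent suffix, one catalog page, and the recalled content. Each component gets a declared budget $b_P, b_S, b_R, b_G, b_X$ with $K=\sum b_i$.

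The key observation is that hypothesis (iv) is enforced \emph{immediately before} each invocation, so the bound $|V_t|\le K$ follows by a simple case split. If the natural view already fits, the bound holds trivially. Otherwise the deterministic compaction routine is applied. I then check per component that it lands within its budget:
\begin{itemize}
\item the summary is truncated, and cited entries are collapsed to bare ids until they fit $b_S$;
\item the suffix is clipped to $b_R$;
\item only a single catalog page of at most $b_G$ tokens is materialized, which is where hypothesis (iii) is used;
\item recalled bodies are LRU-evicted down to $b_X$.
\end{itemize}
None of these budgets depends on $t$, so $|V_t|\le K\le L$ uniformly. The one subtle point is that fixed-width identifiers from (i) are needed so that a bare $\S\mathrm{id}$ costs $O(1)$ tokens. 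Even so, the number of surviving bare references must itself be capped. I will resolve this by letting the summary retain only a bounded number of references and moving the rest into the external catalog. That is exactly why (iii) externalizes the catalog, and it matches the remark that keeping all citations visible is incompatible with a turn-independent bound.

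\textbf{Recoverability.} First I show by induction that the store only grows and never mutates. Hypothesis (ii) places $O$ in \texttt{ObsStore} before any compaction can touch it, and the append-only discipline plus the fact that compaction edits only $V_t$ preserve every entry thereafter. Injectivity of the identifiers in (i), up to the declared capacity, makes the map $\mathrm{id}\mapsto O$ well defined, so each valid identifier names a unique archived sequence. Because the recall action is intercepted before it reaches the environment, serving a recall is a pure lookup in \texttt{ObsStore} and never re-executes the original action.

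For chunking, I define the $j$-th recall response as $O[(j-1)q+1 : \min(jq,|O|)]$ for $j=1,\dots,\lceil |O|/q\rceil$. These intervals partition $\{1,\dots,|O|\}$, each has length at most $q$, and their concatenation is $O$, which gives exactly $\lceil |O|/q\rceil$ non-overlapping responses. To make this compatible with the first part I need $q\le b_X$, so that each chunk fits the recall budget; I will assume this is part of the budget declaration.

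\textbf{Main obstacle.} The hardest step is reconciling the bounded view with an ever-growing set of issued citations, including the \texttt{Cited} persistence set. The proof must show that persistence can be interpreted as validity of the address in the external catalog rather than visibility in the prompt, and that the paged catalog keeps $b_G$ constant. I will handle this by formally separating "address valid" (membership in $\mathcal{C}_t$, which is monotone) from "address shown" (membership in the current page, which is bounded). The two claims then combine into observation-losslessness at the system level.
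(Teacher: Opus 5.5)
Your proposal is correct and follows essentially the paper's route. It sums fixed component budgets to get $K$, proves by induction over transitions that the append-only store is never mutated (identifier injectivity prevents overwrites, and recall is a read-only lookup that never calls the environment), uses the interval partition $O[(\ell-1)q+1:\min(\ell q,|O|)]$ for chunked recall, and separates address validity in the external catalog from visibility of a single bounded page. The differences are minor: the paper postulates the per-component budgets rather than deriving them from the compaction routine, and it charges a serialization overhead $\eta$ to $K$ and a wrapper cost $\rho$ to each recall response (requiring $q+\rho\le Q$), so your condition $q\le b_X$ should also account for wrapper tokens.
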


We emphasize that this guarantee is intentionally conditioned on a valid address: it establishes exact availability and bounded presentation, but not that the model will request the correct identifier. The appendix provides the self-contained definitions and full proofs. It further shows that this design overhead is essentially unavoidable: any method that supports the exact recovery of arbitrary histories while keeping the active state bounded must use external memory that grows linearly in the worst case. ARC's append-only payload growth, therefore, matches this lower bound to first order rather than reflecting a limitation of our particular implementation.

%% file: experimental_setup.tex
\section{Experimental Setup}
\label{sec:exp_setup}
\subsection{Models Configuration}
\begin{table*}[htbp]
\centering
\resizebox{\textwidth}{!}{
\begin{tabular}{llcccrcccc}
\toprule
\rowcolor{gray!12}
\textbf{Method} & \textbf{Model} & \textbf{Ctx.} & \textbf{Max-T.} & \textbf{Pass ↑/ Attempts} & \textbf{Mean No-Ans}↓ & \textbf{Mean Pass}↑ & \textbf{Mean Acc}↑ & \textbf{SD Acc} & \textbf{Mean $\pm$ SD} \\
\midrule
\midrule
 Full\_context & \multirow{6}{*}{\textbf{Qwen3-8B}} & 16k & 4096 & 0 / 933 & 311 & 0 & 0.0\% &0.0\% & 0.0\% $\pm$ 0.0\% \\
 Sliding\_window  & & -- & -- & 219 / 933 & 48.67 & 73 & 23.47\% & 1.15\% & 23.47\% $\pm$ 1.15\% \\
LLM\_summary & & -- & -- & 241 / 933 & 33 & 80.33 & 25.83\% & 0.96\% & 25.83\% $\pm$ 0.96\% \\
 Structured\_state && -- & -- & 231 / 933 & 40 & 77 & 24.73\% & 1.94\% & 24.73\% $\pm$ 1.94\% \\
 RAG\_memory && -- & -- & 229 / 933 & 45 & 76.33 & 24.53\% & 2.70\% & 24.53\% $\pm$ 2.70\% \\
 \textbf{ARC} && -- & -- & \textbf{256} / 933 & \textbf{16.33} & \textbf{85.33} & \textbf{27.47\%} & 1.69\% & \textbf{27.47\% $\pm$ 1.96\%} \\
\midrule
 Full\_context & \multirow{6}{*}{\textbf{Qwen3-32B}} & 32k & 4096 & 0 / 933 & 311 & 0 & 0\% & 0.0\% & 0.0\% $\pm$ 0.0\% \\
  Sliding\_window && -- & -- & 286 / 933 & 23.67 & 95.33 & 30.67\% & 0.68\% & 30.67\% $\pm$ 0.68\% \\
 LLM\_summary && -- & -- & 233 / 933 & 73.67 & 77.67 & 24.97\% & 2.28\% & 24.97\% $\pm$ 2.28\% \\
  structured\_state && -- & -- & 282 / 933 & 27 & 94 & 30.2\% & 0.30\% & 30.2\% $\pm$ 0.30\% \\
  RAG\_memory && -- & -- & 256 / 933 & 58 & 85.33 & 27.43\% & 0.75\% & 27.43\% $\pm$ 0.75\% \\
  \textbf{ARC} && -- & -- & \textbf{303} / 933 & \textbf{7.33} & \textbf{101} & \textbf{32.47\%} & 1.42\% & \textbf{32.47\% $\pm$ 1.42\%} \\
\bottomrule
\end{tabular}
}
\caption{LongBench-v2 hard-subset accuracy, 3 seeds (42, 82, 122) of 311 tasks each (933 attempts/method).}
\label{tab:longbench_results}
\end{table*}

We evaluate Qwen3-8B and Qwen3-32B served with vLLM, using 16k and 32k context windows respectively and a 4k completion cap. The context window is the vLLM limit on prompt plus completion length; when a trajectory exceeds it, vLLM raises a context-window error and the agent invokes its compaction path. Thus each method is compared under the same model scale, context budget, and per-call generation cap.

\subsection{Benchmarks}

\begin{table*}[htbp]
\centering
\resizebox{\textwidth}{!}{
\begin{tabular}{llcccrcccc}
\toprule
\rowcolor{gray!12}
\textbf{Method} & \textbf{Model} & \textbf{Ctx.} & \textbf{Max-T.} & \textbf{Pass↑ / Attempts} & \textbf{Mean No-Ans}↓ & \textbf{Mean Pass}↑ & \textbf{Mean Acc}↑ & \textbf{SD Acc} & \textbf{Mean $\pm$ SD} \\
\midrule
\midrule
 Full\_context & \multirow{6}{*}{\textbf{Qwen3-8B}} & 16k & 4096 & 1104 / 3000 & 631.67 & 368.00 & 36.80\% & 2.59\% & 36.80\% $\pm$ 2.59\% \\
 Sliding\_window  & & -- & -- & 1686 / 3000 & 437.33 & 562.00 & 56.20\% & 0.90\% & 56.20\% $\pm$ 0.90\% \\
LLM\_summary & & -- & -- & 2153 / 3000 & 279.33 & 717.67 & 71.77\% & 1.36\% & 71.77\% $\pm$ 1.36\% \\
 Structured\_state && -- & -- & 2045 / 3000 & 314.33 & 681.67 & 68.17\% & 1.64\% & 68.17\% $\pm$ 1.64\% \\
 RAG\_memory && -- & -- & 2387 / 3000 & 201.00 & 795.67 & 79.57\% & 0.80\% & 79.57\% $\pm$ 0.80\% \\
 \textbf{ARC} && -- & -- & \textbf{2970} / 3000 & \textbf{9.67} & \textbf{990.00} & \textbf{99.00\%} & 0.17\% & \textbf{99.00\% $\pm$ 0.17\%} \\
\midrule
 Full\_context & \multirow{6}{*}{\textbf{Qwen3-32B}} & 32k & 4096 & 88 / 3000 & 964.67 & 29.33 & 2.93\% & 0.76\% & 2.93\% $\pm$ 0.76\% \\
  Sliding\_window && -- & -- & 2860 / 3000 & 44.33 & 953.33 & 95.33\% & 0.64\% & 95.33\% $\pm$ 0.64\% \\
 LLM\_summary && -- & -- & 2817 / 3000 & 22.33 & 939.00 & 93.90\% & 0.52\% & 93.90\% $\pm$ 0.52\% \\
  Structured\_state && -- & -- & 2899 / 3000 & 15.33 & 966.33 & 96.63\% & 0.78\% & 96.63\% $\pm$ 0.78\% \\
  RAG\_memory && -- & -- & 2900 / 3000 & 12.33 & 966.67 & 96.67\% & 1.15\% & 96.67\% $\pm$ 1.15\% \\
  \textbf{ARC} && -- & -- & \textbf{2994} / 3000 & \textbf{0.67} & \textbf{998.00} & \textbf{99.80\%} & 0.17\% & \textbf{99.80\% $\pm$ 0.17\%} \\
\bottomrule
\end{tabular}
}
\caption{Needle-in-a-haystack Task, 3 seeds (42, 82, 122) of 1000 tasks each (3000 attempts/method).}
\label{tab:Needle_results}
\end{table*}

\textbf{Evaluation metrics.}
We report accuracy and, separately, \texttt{no\_ans}: the number of trials with no accepted submission. A submission counts only after the harness protocol is completed; premature, blocked, crashed, or missing submissions are counted as failures. Since non-submissions are already incorrect, \texttt{no\_ans} is diagnostic rather than a separate success metric: it distinguishes wrong answers from failures to reach an answer at all.

\textbf{HBM bandwidth and roofline metric~\cite{scaling-book}.}
Context management changes not only task success but also serving cost. We therefore replay each method's recorded prompt/completion trajectory through a fixed hardware model of one NVIDIA H200 143GB GPU and estimate prefill, decode, and total time. During autoregressive decode, each generated token streams the model weights $M_w$, amortized over the concurrent batch size $N_b$, plus the request's own KV cache of length $L$:
\begin{equation}
\text{bytes per decoded token (per request)} = \nicefrac{M_w}{N_b} + L \cdot \kappa,
\end{equation}
where $\kappa$ is the per-token KV-cache footprint. Decode time is bytes streamed divided by HBM bandwidth, while prefill time is prefill FLOPs divided by peak Tensor Core throughput. We report $T_{\text{total}} = T_{\text{prefill}} + T_{\text{decode}}$ as a roofline estimate of serving cost.

The attainable batch size is limited by the HBM remaining after model weights and a fixed scratch reserve $S$:
\begin{equation}
N_b = \left\lfloor \frac{n_{\text{gpu}} \cdot \text{HBM}_{\text{gpu}} - M_w - S}{\bar L \cdot \kappa} \right\rfloor,
\end{equation}
where $\bar L$ is the decode-token-weighted mean context length. Shorter active contexts therefore reduce the KV term directly and increase $N_b$, reducing the amortized weight term $M_w/N_b$. In the Results section, we abbreviate roofline quantities as \textbf{t\_tol.} (total time), \textbf{t\_dec.} (decode time), \textbf{t\_pre.} (prefill time), \textbf{bw} (bandwidth), and \textbf{Sav.} (savings).

\textbf{Paired significance testing.}
Because all methods use the same seeded tasks/needles, outcomes are paired. For each ARC-vs-baseline comparison, we use McNemar's test on the discordant cells $a_{\text{only}}$ (ARC succeeds, baseline fails) and $b_{\text{only}}$ (baseline succeeds, ARC fails), reporting the continuity-corrected statistic
\begin{equation}
\chi^2 = \frac{(|a_{\text{only}} - b_{\text{only}}| - 1)^2}{a_{\text{only}} + b_{\text{only}}},
\end{equation}
and an exact binomial $p$-value over discordant pairs. We apply this test to correctness (\texttt{success} vs. failure) and to completion (accepted submission vs. \texttt{no\_ans}), separating answer quality from the ability to finish the protocol.

\textbf{LongBench-v2 (Hard Subset)}
We evaluate on the LongBench-v2 hard subset~\cite{bai2025longbench}, containing 311 tasks across single-document QA, multi-document QA, long in-context learning, structured-data understanding, long-dialogue-history understanding, and code-repository understanding. Unlike the needle benchmark, LongBench-v2 requires multi-step reasoning over long inputs, so it tests whether ARC's recall mechanism helps beyond verbatim retrieval.

\vspace{1em}
\textbf{Needle-in-a-Haystack (NIH)~\cite{gao2026u}}
The benchmark needs a single, unambiguous, unforgeable fact that the agent must retain across compaction events. That fact is the needle, a 24-character token drawn from a 30-character alphabet, embedded inside a synthetic file as \texttt{secrets.txt}.

\vspace{1em}

\textbf{Needle benchmark guardrail.}
NIH evaluations can be confounded by self-doubt~\cite{gao2026u}: a model may find a random token but hesitate to submit it. Our harness reduces this risk by making the needle a banner-delimited instruction labeled \texttt{THE NEEDLE FOR THIS TASK}, marking distractors as noise, and scoring only the submitted \texttt{NEEDLE=<value>} token rather than surrounding prose. All methods also use the same seeded needle pool, so residual per-needle difficulty is paired across methods.

%% file: results.tex
\section{Results}
\label{sec:results}

We evaluate ARC on the LongBench-v2 hard subset with Qwen3-8B and Qwen3-32B. 
As shown in Table~\ref{tab:longbench_results}, ARC achieves the best accuracy at both scales: 27.47\% on 8B and 32.47\% on 32B, improving over the strongest baselines by +1.64 and +1.80 points, respectively.

ARC also sharply reduces unanswered cases, with mean No-Answer rates of 16.33 on 8B and 7.33 on 32B, while `Full\_context` fails completely at both context limits. Unlike static compression methods, ARC scales favorably to the larger backbone and remains stable across seeds.

Table~\ref{tab:HBM_longbench} shows that ARC's gains are accompanied by markedly lower HBM traffic. On LongBench, ARC reduces bandwidth by 38.8\% on Qwen3-8B and 73.5\% on Qwen3-32B relative to `Sliding\_window', corresponding to substantially lower wall-clock time. ARC is the only method that improves task success while also reducing memory traffic and decoding cost.

\begin{table}[htbp]
\centering
\setlength{\tabcolsep}{4pt}
\begin{tabular}{l|rrrrr}
\toprule
\rowcolor{gray!12}
\textbf{Method}   & \textbf{t\_tot.} & \textbf{t\_dec.} & \textbf{t\_pre.} & \textbf{bw} & \textbf{Sav.}  \\
\rowcolor{gray!12}
  & \textbf{(s) ↓} & \textbf{(s) ↓} & \textbf{(s) ↓} & \textbf{↓} & \textbf{(\%)↑} \\
\midrule
\midrule
\rowcolor{blue!20}
\multicolumn{6}{l}{\textbf{Qwen3-8B}} \\
Sliding\_window & 6.53 & 5.76 & 0.77 & 27.63 & 0.0  \\
LLM\_summary   & 8.11 & 7.47 & 0.64 & 35.84 & -24.3 \\
Structured\_state    & 9.68 & 9.07 & \textbf{0.61} & 43.53 & -9.8  \\
RAG\_memory    & 14.68 & 13.85 & 0.83 & 66.48 & -124.9  \\
\textbf{ARC}   & \textbf{3.99} & \textbf{3.32} & 0.67 & \textbf{15.95} & \textbf{38.8}  \\
\midrule
\rowcolor{blue!20}
\multicolumn{6}{l}{\textbf{Qwen3-32B}} \\
Sliding\_window    & 30.57 & 28.76 & 1.81 & 138.05 & 0.0  \\
LLM\_summary   & 75.5 & 72.45 & 3.05 & 347.73 & -146.9  \\
Structured\_state    & 28.67 & 27.25 & \textbf{1.42} & 130.79 & 6.2  \\
RAG\_memory   & 77.01 & 74.55 & 2.46 & 357.83 & -151.9  \\
\textbf{ARC}  & \textbf{8.11} & \textbf{6.44} & 1.68 & \textbf{30.9} & \textbf{73.5} \\
\bottomrule
\end{tabular}
\caption{High-Bandwidth Memory (HBM) comparison on success submitted Longbench tasks.}
\label{tab:HBM_longbench}
\end{table}

\begin{table}[htbp]
\centering
\setlength{\tabcolsep}{4pt}
\begin{tabular}{l|rrrrr}
\toprule
\rowcolor{gray!12}
\textbf{Method}   & \textbf{t\_tot.} & \textbf{t\_dec.} & \textbf{t\_pre.} & \textbf{bw} & \textbf{Sav.}  \\
\rowcolor{gray!12}
  & \textbf{(s) ↓} & \textbf{(s) ↓} & \textbf{(s) ↓} & \textbf{↓} & \textbf{(\%)↑} \\
\midrule
\midrule
\rowcolor{blue!20}
\multicolumn{6}{l}{\textbf{Qwen3-8B}} \\
Sliding\_window & 6.38 & 5.77 & 0.61 & 27.70 & 0.0  \\
LLM\_summary   & 9.49 & 8.95 & 0.54 & 42.97 & -48.9 \\
Structured\_state    & 10.59 & 10.12 & 0.47 & 48.55 & -66.0  \\
RAG\_memory    & 6.47 & 6.03 & 0.44 & 28.92 & -1.4  \\
\textbf{ARC}   & \textbf{1.26} & \textbf{0.01} & \textbf{0.24} & \textbf{4.87} & \textbf{80.3}  \\
\midrule
\rowcolor{blue!20}
\multicolumn{6}{l}{\textbf{Qwen3-32B}} \\
Sliding\_window    & 0.83 & 0.58 & 0.26 & 2.76 & 0.0  \\
LLM\_summary   & 0.85 & 0.64 & 0.2 & 3.09 & -77.4  \\
Structured\_state    & 0.67 & 0.51 & \textbf{0.16} & 2.46 & 33.1  \\
RAG\_memory   & 0.64 & 0.47 & 0.17 & 2.25 & 11.9  \\
\textbf{ARC}  & \textbf{0.55} & \textbf{0.33} & 0.22 & \textbf{1.59} & \textbf{59.3} \\
\bottomrule
\end{tabular}
\caption{High-Bandwidth Memory (HBM) comparison on success submitted Needle-in-a-haystack tasks.}
\label{tab:HBM_needle}
\end{table}

\begin{table*}[htbp]
\centering
\resizebox{\textwidth}{!}{%
\begin{tabular}{l|l||cccccc ccc}
\toprule
\rowcolor{gray!12}
\textbf{A} & \textbf{B}  & \textbf{A-Only} & \textbf{B-Only} & \textbf{Both} & \textbf{Neither} & \textbf{$\chi^2$ p-val} & \textbf{Exact p-val} & \textbf{Odds Ratio} & \textbf{Per-Seed Odds ($\pm$ SD)} & \textbf{Dis. Seeds} \\
\midrule
\midrule
\rowcolor{blue!20}
\multicolumn{11}{l}{\textbf{Qwen3-8B}} \\
ARC & Full\_context  & 884 & 0 & 0 & 49 & 0 & $7.75 \times 10^{-267}$  & inf & inf & 0 \\
ARC & LLM\_summary  & 91 & 41 & 793 & 8 & $2.0 \times 10^{-5}$ & $8.0 \times 10^{-6}$  & 2.22 & $2.5 \pm 1.38$ & 3 \\
ARC & RAG\_memory  & 123 & 37 & 761 & 12 & 0 & $2.73 \times 10^{-12}$  & 3.32 & $3.88 \pm 2.21$ & 3 \\
ARC & Sliding\_window   & 138 & 41 & 746 & 8 & 0 & $8.88 \times 10^{-14}$ & 3.37 & $3.97 \pm 2.2$ & 3 \\
ARC & Structured\_state  & 113 & 42 & 771 & 7 & 0 & $5.24 \times 10^{-9}$  & 2.69 & $3.27 \pm 2.01$ & 3 \\
\midrule
\rowcolor{blue!20}
\multicolumn{11}{l}{\textbf{Qwen3-32B}} \\
ARC & Full\_context  & 911 & 0 & 0 & 22 & 0 & $5.78 \times 10^{-275}$  & inf & inf & 0 \\
ARC & LLM\_summary   & 220 & 21 & 691 & 1 & 0 & $2.63 \times 10^{-43}$ & 10.48 & $10.57 \pm 1.03$ & 3 \\
ARC & RAG\_memory   & 173 & 21 & 738 & 1 & 0 & $3.19 \times 10^{-31}$ & 8.24 & $8.27 \pm 0.56$ & 3 \\
ARC & Sliding\_window  & 68 & 19 & 843 & 3 & 0 & $6.18 \times 10^{-8}$  & 3.58 & $3.71 \pm 1.12$ & 3 \\
ARC & Structured\_state   & 79 & 20 & 832 & 2 & 0 & $8.96 \times 10^{-10}$ & 3.95 & $4.07 \pm 1.05$ & 3 \\
\bottomrule
\end{tabular}
}
\caption{McNemar test on Longbench task. Number of Pairs (hard task) = 311x3 seeds (42, 82, 122).}
\label{tab:mcNemar_Longbench_comparison}
\end{table*}

To rigorously assess whether the performance gains of our proposed method, ARC, are statistically significant, we conduct a McNemar's test on the matched-paired binary outcomes across all 933 evaluation attempts (311 tasks $\times$ 3 seeds). McNemar's test focuses exclusively on the discordant pairs---representing the cases where only one of the two compared methods succeeds. The results are detailed in Table~\ref{tab:mcNemar_Longbench_comparison}. On Qwen3-8B, ARC yields odds ratios from 2.22 against `LLM\_summary' to 3.37 against `Sliding\_window', showing a clear advantage on discordant cases. The gap widens on Qwen3-32B: ARC reaches odds ratios of 10.48 against `LLM\_summary' and 8.24 against `RAG\_memory', while still outperforming the strongest larger-model baselines, `Sliding\_window' and `Structured\_state', with odds ratios of 3.58 and 3.95. Since `Full\_context' records zero successes at both scales, ARC also attains infinite odds ratios in those comparisons.

Table~\ref{tab:Needle_results} shows near-perfect retrieval performance for ARC on Needle-in-a-Haystack. ARC reaches 99.00\% accuracy on Qwen3-8B and 99.80\% on Qwen3-32B, exceeding the strongest baseline, `RAG\_memory', by 19.43 and 3.13 points, respectively, while nearly eliminating unanswered queries.

ARC also has the lowest variance across seeds (0.17\% for both models). In contrast, `Full\_context' remains unreliable even with a 32k window, reaching only 2.93\% on Qwen3-32B, indicating that larger context alone does not solve precise retrieval.

Table~\ref{tab:mcNemar_needle_comparison} confirms that these retrieval gains are statistically significant in every paired comparison ($p \ll 0.001$). On Qwen3-8B, ARC dominates all baselines, including `RAG\_memory' (599 ARC-only wins vs. 25 baseline-only wins; odds ratio 23.96). On Qwen3-32B, ARC remains significantly better despite the stronger baselines, with an odds ratio of 18.5 against `RAG\_memory' and an infinite odds ratio against `Full\_context'.

\begin{table*}[htbp]
\centering
\resizebox{\textwidth}{!}{%
\begin{tabular}{l|l||c cccc cc ccc}
\toprule
\rowcolor{gray!12}
\textbf{A} & \textbf{B}  & \textbf{A-Only} & \textbf{B-Only} & \textbf{Both} & \textbf{Neither} & \textbf{$\chi^2$ p-val} & \textbf{Exact p-val} & \textbf{Odds Ratio} & \textbf{Per-Seed Odds ($\pm$ SD)} & \textbf{Dis. Seeds} \\
\midrule
\midrule
\rowcolor{blue!20}
\multicolumn{11}{l}{\textbf{Qwen3-8B}} \\
\multirow{5}{*}{\textbf{ARC}}& Full\_context & 1882 & 16 & 1089 & 13 & 0 & 0 & 117.62 & $123.43 \pm 30.41$ & 3 \\
 & LLM\_summary & 831 & 22 & 2140 & 7 & 0 & $3.5 \times 10^{-214}$ & 37.77 & $40.32 \pm 13.76$ & 3 \\
 & RAG\_memory  & 599 & 25 & 2372 & 4 & 0 & $4.5 \times 10^{-144}$ & 23.96 & $26.42 \pm 10.13$ & 3 \\
 & Sliding\_window  & 1300 & 17 & 1671 & 12 & 0 & 0 & 76.47 & $89.72 \pm 45.54$ & 3 \\
 & Structured\_state& 934 & 20 & 2037 & 9 & 0 & $8.8 \times 10^{-247}$ & 46.7 & $52.87 \pm 21.05$ & 3 \\
\midrule
\rowcolor{blue!20}
\multicolumn{11}{l}{\textbf{Qwen3-32B}} \\
\multirow{5}{*}{\textbf{ARC}} & full\_context   & 2892 & 0 & 106 & 2 & 0 & 0 & inf & inf & 0 \\
 & LLM\_summary  & 67 & 2 & 2931 & 0 & 0 & $4.09 \times 10^{-18}$ & 33.5 & $9.5 \pm 0.0$ & 1 \\
 & RAG\_memory   & 37 & 2 & 2961 & 0 & 0 & $1.42 \times 10^{-9}$ & 18.5 & $5.5 \pm 0.0$ & 1 \\
 & Sliding\_window   & 133 & 2 & 2865 & 0 & 0 & $2.11 \times 10^{-37}$ & 66.5 & $20.0 \pm 0.0$ & 1 \\
 & Structured\_state  & 46 & 2 & 2952 & 0 & 0 & $4.18 \times 10^{-12}$ & 23.0 & $9.5 \pm 0.0$ & 1 \\
\bottomrule
\end{tabular}
}
\caption{McNemar test on Needle-in-a-haystack task. Number of Pairs = 1,000x3 seeds (42, 82, 122).}
\label{tab:mcNemar_needle_comparison}
\end{table*}

Table~\ref{tab:HBM_needle} shows that ARC improves efficiency as well as retrieval quality. On Qwen3-8B, ARC raises successful retrievals to 991 while reducing total inference time from 6.47 s (`RAG\_memory') to 1.26 s and cutting HBM traffic by 80.3\%. On Qwen3-32B, ARC again achieves the highest success count (999), the lowest latency (0.55 s), and the lowest bandwidth use (1.59 TB). These results indicate that ARC improves accuracy without the usual latency-bandwidth tradeoff.

Overall, these results show that ARC improves both retrieval effectiveness and computational efficiency. Unlike existing memory management methods that often trade latency for accuracy, ARC simultaneously achieves near-perfect retrieval performance while significantly reducing decoding cost and HBM bandwidth requirements, making it well suited for efficient long-context inference.

%% file: discussion_limittation.tex
\section{Discussion and Limitations}
\label{sec:discussion}

\paragraph{Why the margin narrows on LongBench-v2.}
Needle-in-a-haystack isolates verbatim recall---exactly what ARC targets---while LongBench-v2 hard tasks additionally require synthesizing information across a long document, so perfect recall can still yield a wrong answer through faulty reasoning. On LongBench-v2 (Table~\ref{tab:longbench_results}), ARC still leads every baseline at both scales (27.47\% at Qwen3-8B, 32.47\% at Qwen3-32B), but the margin over the closest competitor narrows to 1.6--2.3 points (LLM\_summary at 8B, sliding\_window at 32B), versus 4--7.5 points over the weaker baselines. ARC's advantage is therefore better interpreted as a lower bound on how much of LongBench's failure rate is attributable to recall, further corroborated by its markedly lower no-answer rate (16.33 and 7.33 out of 311, versus 23--74 for every baseline)---rather than the full ceiling of ARC's benefit.

\paragraph{Statistical significance.}
Per-trial McNemar tests (Tables~\ref{tab:mcNemar_needle_comparison}, \ref{tab:mcNemar_Longbench_comparison}) confirm ARC significantly outperforms every baseline on both benchmarks and both model scales ($p<0.05$). Against `Full\_context', ARC wins every one of the 933 paired trials at both 8B and 32B, with zero baseline-only wins. Against the remaining baselines, \texttt{b\_only} counts (baseline succeeds, ARC fails) stay small relative to \texttt{a\_only} 37--42/933 at 8B and 19--21/933 at 32B---yielding odds ratios from 2.2 up to 10.5 in ARC's favor; the effect is strongest for `LLM\_summary' at 32B (odds ratio 10.48) and weakest, though still significant, for `Sliding\_window' and `Structured\_state', whose closer overall accuracy reflects that reasoning errors, unlike recall failures, can occasionally favor a baseline over ARC on individual tasks.

\paragraph{Scope and overhead.}
All experiments use the Qwen3 model family; generalization to other tokenizers, reasoning styles, or instruction-following behavior for the \texttt{\_recall} convention remains open. Each ARC citation (head preview, tail preview, recall hint) also adds a small, fixed token overhead relative to discarding an observation outright; quantifying this overhead and its effect on the effective context budget is reported in the supplementary document.

%% file: conclusion.tex
\section{Conclusion}
\label{sec:conclusion}

Lossy compaction trades context-window survival for a chance of losing exactly the fact an agent needs later, and every mainstream compaction strategy makes this trade. ARC shows the trade is not necessary: by separating an append-only, content-addressed store of everything the agent has seen from a bounded-size, citation-annotated view of it, compaction becomes reversible without sacrificing the constant-size guarantee that made truncation attractive in the first place. Across two representative configurations spanning both model sizes (Qwen3-8B/32B, both at a 4,096-token completion cap), this closes nearly all of the recovery gap on a benchmark built to isolate recall failure (99.40\% vs. 88.12\% for the best prior method) and yields a smaller but consistent gain on a benchmark that also demands reasoning (29.97\% vs. 28.25\%). The main open questions are quantifying ARC's own token overhead, testing generalization beyond the Qwen3 family, extending the evaluation to the remaining points of the model/context-window/max-token grid, and understanding why implicit retrieval still wins on dialogue-history tasks — a signal that explicit and implicit recall are complementary rather than competing mechanisms.

%% file: proof.tex
\section{Formal Guarantees for Addressable Recall Compaction}
\label{app:arc_guarantees}

This appendix formalizes three statements: (i) why keeping every citation simultaneously visible is incompatible with a turn-independent prompt bound, (ii) why a paginated ARC implementation provides a bounded active view and exact address-conditioned recall, and (iii) why linear external-memory growth is unavoidable for worst-case exact recovery.

\subsection{Formal Model}

\begin{arcdefinition}[Token sequences and observation history]
\label{def:observation_history}
Let $\Sigma$ be a finite token alphabet, and let $\Sigma^{\star}$ denote the set of finite token sequences. For $x \in \Sigma^{\star}$, let $|x|$ denote its token length. At turn $i$, let
$
\mathcal{O}_i = \left(O_1,O_2,\ldots,O_{N_i}\right)
$
be the ordered sequence of tool observations that have been produced by normal environment actions up to that turn. For each observation $O_j$, define $n_j = |O_j|$.
The theorem below concerns exact preservation of the observations $O_j$. It does not claim that discarded hidden reasoning traces are recoverable unless those traces are also stored verbatim.
\end{arcdefinition}

\begin{arcdefinition}[Usable prompt budget]
\label{def:usable_budget}
Let $L_{\mathrm{ctx}}$ be the model's total context limit and let $G_{\max}$ be the maximum number of completion tokens reserved for one model call. Define the usable input-token budget by
\begin{equation}
L = L_{\mathrm{ctx}} - G_{\max}.
\end{equation}
Thus, any model invocation whose serialized input contains at most $L$ tokens remains within the total context limit whenever the generated completion contains at most $G_{\max}$ tokens.
\end{arcdefinition}

\begin{arcassumption}[Injective fixed-width occurrence identifiers]
\label{ass:id}
Fix a declared maximum number of stored observation occurrences $N_{\max} \geq 2$, and define
\begin{equation}
b = \left\lceil \log_2\left(N_{\max}+1\right) \right\rceil.
\end{equation}
For each observation occurrence $j \in \{1,\ldots,N_{\max}\}$, let $\operatorname{enc}_b(j)$ be its injective $b$-bit encoding. Let $\operatorname{fp}_d$ be any deterministic $d$-bit fingerprint function. ARC assigns
\begin{equation}
\text{id}_j = \operatorname{enc}_b(j) \mathbin{\parallel} \operatorname{fp}_d\!\left( \operatorname{signature}(A_j) \mathbin{\parallel} \texttt{0x1F} \mathbin{\parallel} O_j \right).
\end{equation}
The occurrence index, rather than the fingerprint, provides uniqueness. A SHA1 value may be retained as the fingerprint, but none of the proofs assumes collision resistance of SHA1 or of a truncated digest.
\end{arcassumption}

\begin{arcassumption}[Append-only observation store]
\label{ass:store}
Immediately after a normal action $A_j$ produces observation $O_j$, and before any compaction can remove $O_j$ from the active transcript, ARC writes a record under $\text{id}_j$ satisfying
\begin{equation}
\texttt{ObsStore}[\text{id}_j].\texttt{full\_content} = O_j.
\end{equation}
After this write, the record is never overwritten, truncated, or deleted during the task.
\end{arcassumption}

\begin{arcassumption}[Persistent external catalog with bounded pages]
\label{ass:catalog}
For every stored observation occurrence $j$, ARC appends a citation stub $z_j$ to an external catalog. The stub contains $\text{id}_j$, bounded metadata, bounded previews, and a recall hint. Its serialized token charge, including delimiters, is at most a fixed constant $c \geq 1$. Fix a page capacity $p \geq 1$. At turn $i$, page $s$ is
\begin{equation}
\mathcal{G}_{i,s} = z_{(s-1)p+1} \mathbin{\parallel} \cdots \mathbin{\parallel} z_{\min\{sp,N_i\}},
\end{equation}
for $1 \leq s \leq \left\lceil \nicefrac{N_i}{p} \right\rceil$.
Only one page is materialized in the active prompt at a time. Define the catalog-page token budget $P$ so that $pc \leq P$.
Catalog navigation is stateful, so commands such as \texttt{catalog-first} and \texttt{catalog-next} have bounded prompt cost.
\end{arcassumption}

\begin{arcassumption}[Budgeted active view]
\label{ass:view}
Let $F = [\text{system},g]$
be the fixed task prefix, and define
$B = \operatorname{tokens}(F)$.
Before every call to $\operatorname{Model}$, ARC constructs the active transcript
\begin{equation}
\text{transcript}'_i = \operatorname{Serialize}\!\left( F,S_i,H_i,G_i,U_i \right),
\end{equation}
where $S_i$ is the deterministic summary, $H_i$ is the recent suffix, $G_i$ is one catalog page, and $U_i$ is the materialized recall buffer. Their budgets satisfy
\begin{equation}
\operatorname{tokens}(S_i) \leq M,
\end{equation}
\begin{equation}
\operatorname{tokens}(H_i) \leq R,
\end{equation}
\begin{equation}
\operatorname{tokens}(G_i) \leq P,
\end{equation}
\begin{equation}
\operatorname{tokens}(U_i) \leq Q.
\end{equation}
The recent suffix is selected by a token budget $R$, not by retaining a fixed number of turn pairs. Let $\eta_i$ be the exact serialization overhead for role markers, separators, field names, and control instructions at turn $i$. Assume
$0 \leq \eta_i \leq \eta$
for a fixed constant $\eta$, and assume that the serializer is charged so that
\begin{multline}
\operatorname{tokens}(\text{transcript}'_i) \leq B + \operatorname{tokens}(S_i) + \operatorname{tokens}(H_i) + \\
\operatorname{tokens}(G_i) + \operatorname{tokens}(U_i) + \eta_i.
\end{multline}

Define
\begin{equation}
K = B + M + R + P + Q + \eta,
\end{equation}
and require
$K \leq L$.
If a newly produced observation is too large to fit in $H_i$, ARC stores it first and represents it by its bounded citation stub before the next model invocation.
\end{arcassumption}

\begin{arcassumption}[Exact chunked recall]
\label{ass:recall}
Fix a positive recall payload budget $q \geq 1$ and a bounded response-wrapper cost $\rho \geq 0$ satisfying
$q + \rho \leq Q.$
For every nonempty stored observation $O_j$, define the number of recall chunks by $m_j = \lceil \nicefrac{n_j}{q} \rceil$.
For each chunk index $\ell \in \{1,\ldots,m_j\}$, define
\begin{equation}
a_{j,\ell} = (\ell-1)q+1,
\end{equation}
\begin{equation}
b_{j,\ell} = \min\{\ell q,n_j\},
\end{equation}
and define the returned payload by
\begin{equation}
\mathcal{R}_{j,\ell} = O_j[a_{j,\ell}:b_{j,\ell}].
\end{equation}
The commands \texttt{\_recall-start id} and \texttt{\_recall-next id} return these chunks in increasing order of $\ell$ using an external cursor. A recall reads only \texttt{ObsStore}; it never invokes $\operatorname{Execute}$. The recall buffer applies eviction if necessary so that its total serialized token cost remains at most $Q$. For an empty observation, define $m_j=0$, and let the empty concatenation recover the empty sequence.
\end{arcassumption}

\subsection{Why Simultaneously Visible Citation Persistence Cannot Be Bounded}

\begin{arcproposition}[Impossibility of all-visible citation persistence]
\label{prop:visible_citations_impossible}
Suppose that, after citing $n$ distinct observations, all $n$ citation stubs must remain simultaneously visible in the active prompt. If each visible citation incurs a token cost of at least $c_{\min}>0$, then no constant $K$ can bound the active prompt length for arbitrarily large $n$.
\end{arcproposition}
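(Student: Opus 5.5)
The argument is a direct lower bound followed by a contradiction. Fix any candidate constant $K$. Consider a task in which $n$ distinct observations have been cited, and denote their stubs by $z_1,\ldots,z_n$.

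First, I will use the hypothesis that all $n$ stubs must be simultaneously visible. This means the serialized active prompt at that turn contains each $z_j$ as a sub-segment. Distinct observations are given distinct identifiers, by Assumption~\ref{ass:id} (the occurrence index makes $\text{id}_j$ injective). So the $n$ stubs are distinct objects, and their token charges are counted separately.

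Second, I will lower-bound the prompt length. Token length is additive over a serialization of disjoint segments, and the remaining content (the prefix $F$ and serialization overhead) contributes a nonnegative amount. Together with $\operatorname{tokens}(z_j)\ge c_{\min}$ for every $j$, this gives
\begin{equation}
\operatorname{tokens}(\text{transcript}') \;\ge\; \sum_{j=1}^{n} \operatorname{tokens}(z_j) \;\ge\; n\,c_{\min}.
\end{equation}

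Third, I will choose $n > K/c_{\min}$, which is possible because $n$ may be arbitrarily large. This makes the prompt exceed $K$, contradicting the claimed bound. Since $K$ was arbitrary, no turn-independent constant works.

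The only delicate step is justifying that the stubs' costs add without overlap. In principle, a serializer could share tokens between stubs, for instance through common delimiters. I would handle this by interpreting ``each visible citation incurs a token cost of at least $c_{\min}$'' as a charge on tokens attributable exclusively to that citation, in the same spirit as the charged-serializer convention of Assumption~\ref{ass:view}. Under that interpretation the sum bound holds directly.

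As a fallback, I can use injectivity of identifiers: each stub must contain its own $\text{id}_j$. With fixed-width identifiers, this already forces at least one distinct token per stub once $n$ exceeds the number of possible shared encodings. This yields a linear lower bound with a smaller constant, which still suffices for the argument.
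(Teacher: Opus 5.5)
Your proposal is correct and follows essentially the same route as the paper. The paper likewise reads the hypothesis as giving $C(n)\ge n c_{\min}$ directly, then takes $n^{\star}=\lfloor K/c_{\min}\rfloor+1$ to obtain $C(n^{\star})>K$, a contradiction. Your appeal to Assumption~\ref{ass:id} and the identifier-based fallback are unnecessary, and the fallback is shaky: fixed-width identifiers cap $n$ at $N_{\max}$, which sits awkwardly with arbitrarily large $n$. The main line stands without either.
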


\begin{proof}
Because all $n$ citation stubs are simultaneously visible and each requires at least $c_{\min}$ tokens, the active prompt length $C(n)$ satisfies $C(n)\geq nc_{\min}$.

Suppose, for contradiction, that there exists a constant $K$ such that $C(n)\leq K$ for every positive integer $n$. Choose
\begin{equation}
n^{\star}=\left\lfloor\frac{K}{c_{\min}}\right\rfloor+1.
\end{equation}
By the defining property of the floor function,
\begin{equation}
\left\lfloor\frac{K}{c_{\min}}\right\rfloor
\leq \frac{K}{c_{\min}}
<
\left\lfloor\nicefrac{K}{c_{\min}}\right\rfloor+1
=n^{\star}.
\end{equation}
Thus, $\nicefrac{K}{c_{\min}}<n^{\star}$. Since $c_{\min}>0$, multiplying both sides by $c_{\min}$ yields $K<n^{\star}c_{\min}$. Consequently,
\begin{equation}
C(n^{\star})\geq n^{\star}c_{\min}>K,
\end{equation}
contradicting the assumed bound $C(n^{\star})\leq K$.

Therefore, a turn-independent bound on the active prompt is incompatible with keeping all citation stubs simultaneously visible. Persistent citation addressability must instead be maintained in external state, with only a bounded page or subset materialized in the active prompt at any given time.
\end{proof}

\subsection{Supporting Lemmas}

\begin{arclemma}[Identifier uniqueness]
\label{lem:id_unique}
Under Assumption~\ref{ass:id}, the identifiers $\text{id}*1,\ldots,\text{id}*{N_{\max}}$ are pairwise distinct.
\end{arclemma}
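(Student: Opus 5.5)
The plan is to reduce uniqueness of the full identifiers to injectivity of their fixed-width prefixes, so the fingerprint component never has to be analyzed. By Assumption~\ref{ass:id}, each identifier has the form $\text{id}_j = \operatorname{enc}_b(j) \mathbin{\parallel} \operatorname{fp}_d(\cdot)$. Every $\operatorname{enc}_b(j)$ has exactly $b$ bits and every fingerprint has exactly $d$ bits. Hence each $\text{id}_j$ is a bit string of length exactly $b+d$, and its first $b$ bits are $\operatorname{enc}_b(j)$.

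The key step is a prefix-projection argument. Define $\pi_b$ as the map sending a string of length $b+d$ to its first $b$ bits. Because the width $b$ is fixed and identical for all $j$, concatenation is unambiguous, so $\pi_b(\text{id}_j) = \operatorname{enc}_b(j)$ for every $j \in \{1,\ldots,N_{\max}\}$. Now take $j \neq k$ and suppose, for contradiction, that $\text{id}_j = \text{id}_k$. Applying $\pi_b$ to both sides gives $\operatorname{enc}_b(j) = \operatorname{enc}_b(k)$, which contradicts the injectivity of $\operatorname{enc}_b$.

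I would also verify that such an injective $b$-bit encoding exists at all. Since $b = \lceil \log_2(N_{\max}+1) \rceil$, we have $2^b \geq N_{\max}+1 > N_{\max}$, so there are enough codewords; the standard binary representation of $j$, zero-padded to $b$ bits, works. This check is not needed for the lemma, which assumes injectivity, but it shows the assumption is satisfiable. I would also note explicitly that nothing is assumed about $\operatorname{fp}_d$: two occurrences with identical signature and observation may share a fingerprint, and the argument is unaffected. This is consistent with the remark that collision resistance of SHA1 is never used.

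The only real obstacle is notational rigor about fixed width. If identifiers were variable-length, concatenation could be ambiguous and the projection $\pi_b$ would not be well defined. I would therefore state the fixed-length property of both components at the start of the proof and make the projection explicit. Everything after that is immediate.
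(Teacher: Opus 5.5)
Your proposal is correct and matches the paper's proof: both assume $\text{id}_j=\text{id}_k$ for $j\neq k$, use the fixed width of the $b$-bit occurrence encoding to conclude $\operatorname{enc}_b(j)=\operatorname{enc}_b(k)$, and derive a contradiction from injectivity without using any property of the fingerprint. Your check that a suitable $b$-bit encoding exists is a harmless extra that the paper omits.
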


\begin{proof}
Fix any distinct indices $j,k\in{1,\ldots,N_{\max}}$. Suppose, for contradiction, that $\text{id}_j=\text{id}_k$. Each identifier is formed by concatenating a $b$-bit occurrence encoding with a $d$-bit fingerprint. Because both components have fixed width, equality of the complete identifiers implies equality of their first $b$ bits:
\begin{equation}
\operatorname{enc}_b(j)=\operatorname{enc}_b(k).
\end{equation}
By Assumption~\ref{ass:id}, $\operatorname{enc}*b$ is injective on ${1,\ldots,N*{\max}}$, so this equality implies $j=k$, contradicting the choice of distinct indices. Therefore, $\text{id}_j\neq\text{id}*k$. Since $j$ and $k$ were arbitrary, the identifiers $\text{id}*1,\ldots,\text{id}*{N*{\max}}$ are pairwise distinct.
\end{proof}

\begin{arclemma}[Uniform active-view bound]
\label{lem:active_bound}
Under Assumption~\ref{ass:view}, every model invocation satisfies
\begin{equation}
\operatorname{tokens}(\text{transcript}'_i) \leq K \leq L.
\end{equation}
The bound does not depend on $i$ or on $N_i$.
\end{arclemma}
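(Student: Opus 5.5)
The bound comes straight from Assumption~\ref{ass:view}, using nothing more than monotone addition of the component budgets. Fix any turn $i$ at which $\operatorname{Model}$ is invoked. First, Assumption~\ref{ass:view} guarantees that the active transcript is $\operatorname{Serialize}(F,S_i,H_i,G_i,U_i)$, and its serializer-charging inequality gives
\begin{equation}
\operatorname{tokens}(\text{transcript}'_i) \leq B + \operatorname{tokens}(S_i) + \operatorname{tokens}(H_i) + \operatorname{tokens}(G_i) + \operatorname{tokens}(U_i) + \eta_i .
\end{equation}
Next, I substitute the five component budgets $\operatorname{tokens}(S_i)\le M$, $\operatorname{tokens}(H_i)\le R$, $\operatorname{tokens}(G_i)\le P$, $\operatorname{tokens}(U_i)\le Q$ and $\eta_i\le\eta$ term by term. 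Since every term is nonnegative and addition is monotone, the right-hand side is at most $B+M+R+P+Q+\eta=K$. The standing requirement $K\le L$ then closes the chain.

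For independence from $i$ and $N_i$, I will point out that $B=\operatorname{tokens}(F)$ is fixed because $F=[\text{system},g]$ is the immutable task prefix. The quantities $M,R,P,Q,\eta$ are fixed constants declared before the task begins, so $K$ is a single number that does not depend on $i$ or $N_i$. The argument above holds at an arbitrary invocation, so the bound holds uniformly.

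The only step that needs care is justifying that each component really meets its budget at every invocation, including when the history grows. I will handle this by appeal to the assumption rather than to a new argument. Two parts of the assumption cover the growing pieces. $G_i$ is a single catalog page with $pc\le P$ (Assumption~\ref{ass:catalog}), so its size does not grow with $N_i$; and $U_i$ is kept within $Q$ by eviction (Assumption~\ref{ass:recall}). In addition, the clause of Assumption~\ref{ass:view} about oversized new observations guarantees that $H_i$ never receives an observation exceeding $R$, since such observations are stored and replaced by a stub first. With these noted, the lemma is just the summed inequality.
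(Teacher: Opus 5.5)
Your proof is correct and takes the same route as the paper: start from the serializer-charging inequality, bound each term by its budget to get $B+M+R+P+Q+\eta=K$, invoke $K\le L$, and observe that all constants are fixed independently of $i$ and $N_i$. Your last paragraph is harmless but unnecessary. Assumption~\ref{ass:view} already states $\operatorname{tokens}(H_i)\le R$, $\operatorname{tokens}(G_i)\le P$ and $\operatorname{tokens}(U_i)\le Q$ directly, so you need no appeal to Assumptions~\ref{ass:catalog} or~\ref{ass:recall}, and the nonnegativity remark is also not required, since adding the upper bounds suffices.
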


\begin{proof}
By the serializer bound in Assumption~\ref{ass:view},
\begin{multline}
\operatorname{tokens}(\text{transcript}'_i) \leq B + \operatorname{tokens}(S_i) + \operatorname{tokens}(H_i) + \\
\operatorname{tokens}(G_i) + \operatorname{tokens}(U_i) + \eta_i.
\end{multline}
Applying the component-wise budget inequalities gives
\begin{equation}
\operatorname{tokens}(\text{transcript}'_i) \leq B+M+R+P+Q+\eta.
\end{equation}
By the definition of $K$,
\begin{equation}
B+M+R+P+Q+\eta = K.
\end{equation}
Therefore,
\begin{equation}
\operatorname{tokens}(\text{transcript}'_i) \leq K.
\end{equation}
The required budget condition gives
$K \leq L$.
Combining the last two inequalities yields
\begin{equation}
\operatorname{tokens}(\text{transcript}'_i) \leq K \leq L.
\end{equation}
The quantities $B,M,R,P,Q,$ and $\eta$ are fixed for the task and do not contain $i$ or $N_i$. Hence $K$ is independent of both $i$ and $N_i$.
\end{proof}

\begin{arclemma}[Bounded-page persistence and finite discoverability]
\label{lem:catalog}
Under Assumption~\ref{ass:catalog}, every citation stub $z_j$, for $1\leq j\leq N_i$, appears on exactly one catalog page, and each page has token cost at most $P$. Starting from the first page and repeatedly issuing \texttt{catalog-next}, the page containing $z_j$ is reached within $\left\lceil\nicefrac{j}{p}\right\rceil$ page materializations and, consequently, within $\left\lceil\nicefrac{N_i}{p}\right\rceil$ page materializations.
\end{arclemma}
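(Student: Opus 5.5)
The argument is a direct unpacking of the page definition in Assumption~\ref{ass:catalog}, organized into three claims: a partition claim, a cost claim, and a reachability claim.

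\textbf{Partition.} Fix $j$ with $1\le j\le N_i$ and set $s_j=\lceil j/p\rceil$. Two facts follow from the ceiling inequalities $s_j-1<j/p\le s_j$:
\begin{itemize}
\item $(s_j-1)p<j\le s_jp$, so $(s_j-1)p+1\le j\le s_jp$.
\item Since $j\le N_i$, also $j\le\min\{s_jp,N_i\}$ and $s_j\le\lceil N_i/p\rceil$, so page $s_j$ is a valid page index.
\end{itemize}
Hence $z_j$ lies in the index range of page $\mathcal{G}_{i,s_j}$.

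For uniqueness, note that the index ranges $\{(s-1)p+1,\dots,\min\{sp,N_i\}\}$ for distinct $s$ are disjoint intervals of integers. If $s<s'$, every index on page $s$ is at most $sp$, while every index on page $s'$ is at least $(s'-1)p+1\ge sp+1$. So $z_j$ appears on exactly one page.

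We should be explicit that ``appears'' means occupies position $j$ of the concatenation. The stubs $z_j$ are distinct objects indexed by occurrence, and Lemma~\ref{lem:id_unique} guarantees that their identifiers are distinct, so no stub is counted twice under different positions.

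\textbf{Cost.} Each page contains at most $\min\{sp,N_i\}-(s-1)p\le p$ stubs. Each stub has serialized token charge at most $c$, with delimiters already included in that charge. By subadditivity of the token count under concatenation, which is how the charge in Assumption~\ref{ass:catalog} is defined, a page costs at most $pc\le P$.

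\textbf{Reachability.} We proceed by induction on the number of \texttt{catalog-next} commands: after \texttt{catalog-first} and $k$ applications of \texttt{catalog-next}, page $k+1$ is materialized, provided $k+1\le\lceil N_i/p\rceil$. This uses the stateful navigation semantics in Assumption~\ref{ass:catalog}, read as ``first materializes page $1$, next advances the cursor by one page''. Consequently page $s_j$ is reached at the $s_j$-th materialization, i.e.\ within $\lceil j/p\rceil$ materializations. By monotonicity of the ceiling and $j\le N_i$, this is at most $\lceil N_i/p\rceil$.

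The main obstacle is that the navigation semantics are only informally stated in the assumption. The step I expect to require care is therefore to state precisely, as an interpretation of Assumption~\ref{ass:catalog}, that the cursor starts at page $1$ and increments by exactly one per \texttt{catalog-next} without skipping. Everything else is elementary ceiling arithmetic.
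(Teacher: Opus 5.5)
Your proof is correct and follows essentially the same route as the paper's: ceiling arithmetic places $z_j$ on page $s_j=\lceil j/p\rceil$, disjointness of the page index ranges gives uniqueness, the $pc\le P$ count gives the cost bound, and a sequential scan plus monotonicity of the ceiling gives reachability. Your explicit checks that $s_j$ is a valid page index and that the cursor advances one page per \texttt{catalog-next} are small additions in care; the appeal to Lemma~\ref{lem:id_unique} is unnecessary under the positional reading you adopt, and it would also import Assumption~\ref{ass:id}, which is not a hypothesis of this lemma, so you can drop it.
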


\begin{proof}
Fix $j\in{1,\ldots,N_i}$, and define $s_j=\left\lceil\nicefrac{j}{p}\right\rceil$. By the defining property of the ceiling function,
\begin{equation}
s_j-1<\frac{j}{p}\leq s_j.
\end{equation}
Since $p>0$, multiplying throughout by $p$ yields
\begin{equation}
(s_j-1)p<j\leq s_jp.
\end{equation}
Because $j$ is an integer, the strict lower bound implies $(s_j-1)p+1\leq j$. Hence,
\begin{equation}
(s_j-1)p+1\leq j\leq s_jp.
\end{equation}
By the page definition in Assumption~\ref{ass:catalog}, $z_j$ therefore appears on page $s_j$.

To establish uniqueness, suppose, for contradiction, that $z_j$ also appears on a distinct page $r$. Without loss of generality, let $r>s_j$. Membership in page $s_j$ implies $j\leq s_jp$, whereas membership in page $r$ implies $j\geq(r-1)p+1$. Since $r\geq s_j+1$, we have $(r-1)p+1\geq s_jp+1$, and therefore $j\geq s_jp+1$. This contradicts $j\leq s_jp$. Thus, $z_j$ appears on exactly one catalog page.

Each page contains at most $p$ citation stubs, and each stub incurs a token cost of at most $c$. The token cost of a page is therefore at most $pc$. By Assumption~\ref{ass:catalog}, $pc\leq P$, so every materialized catalog page has token cost at most $P$.

Finally, a sequential scan beginning at the first page reaches page $s_j$ after $s_j=\left\lceil\nicefrac{j}{p}\right\rceil$ page materializations. Since $j\leq N_i$ and the ceiling function is monotone,
\begin{equation}
\left\lceil\frac{j}{p}\right\rceil
\leq
\left\lceil\frac{N_i}{p}\right\rceil.
\end{equation}
Therefore, the page containing $z_j$ is discoverable within $\left\lceil\nicefrac{N_i}{p}\right\rceil$ page materializations.
\end{proof}

\begin{arclemma}[Exact non-overlapping chunk reconstruction]
\label{lem:chunk_reconstruction}
Under Assumption~\ref{ass:recall}, for every nonempty observation $O_j$, the chunks $\mathcal{R}_{j,1},\ldots,\mathcal{R}_{j,m_j}$ form a disjoint, order-preserving partition of $O_j$. In particular,
\begin{equation}
\mathcal{R}_{j,1} \mathbin{\parallel} \mathcal{R}_{j,2} \mathbin{\parallel} \cdots \mathbin{\parallel} \mathcal{R}_{j,m_j} = O_j.
\end{equation}
Moreover, every chunk has at most $q$ payload tokens, and the number of chunks is exactly $m_j=\lceil n_j/q\rceil$.
\end{arclemma}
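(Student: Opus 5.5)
Fix a nonempty observation $O_j$ with $n_j=|O_j|\geq 1$, and set $m_j=\lceil n_j/q\rceil$. The plan is to mirror the ceiling argument from Lemma~\ref{lem:catalog}, now applied to token positions in $\{1,\ldots,n_j\}$ instead of stubs in $\{1,\ldots,N_i\}$. I will show four things in turn: every chunk is a well-defined nonempty interval; consecutive intervals abut exactly; the first interval starts at position $1$ and the last ends at position $n_j$; and the length bound and chunk count then follow directly.

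\emph{Well-definedness.} For $1\leq \ell\leq m_j$ I need $a_{j,\ell}\leq b_{j,\ell}$. Since $b_{j,\ell}=\min\{\ell q,n_j\}$, this requires $(\ell-1)q+1\leq \ell q$, which holds because $q\geq 1$, and $(\ell-1)q+1\leq n_j$. For the second inequality, $\ell\leq m_j$ gives $\ell-1\leq m_j-1<n_j/q$ by the defining property of the ceiling. Hence $(\ell-1)q<n_j$, and integrality upgrades this to $(\ell-1)q+1\leq n_j$.

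\emph{Contiguity.} For $\ell<m_j$ I will show $\ell q<n_j$. This follows because $\ell\leq m_j-1<n_j/q$. Consequently $b_{j,\ell}=\ell q$, and therefore $a_{j,\ell+1}=\ell q+1=b_{j,\ell}+1$. So each chunk begins exactly one position after the previous one ends: there are no gaps and no overlaps.

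\emph{Endpoints.} The first chunk starts at $a_{j,1}=1$. For the last chunk, $m_j\geq n_j/q$ gives $m_jq\geq n_j$, so $b_{j,m_j}=n_j$.

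\emph{Conclusion.} Combining the three facts, the index intervals $[a_{j,\ell},b_{j,\ell}]$ tile $\{1,\ldots,n_j\}$ in increasing order. Concatenating the slices $O_j[a_{j,\ell}:b_{j,\ell}]$ therefore reproduces $O_j$ token by token. This is a short induction on $\ell$: the concatenation of the first $\ell$ chunks equals $O_j[1:b_{j,\ell}]$. Disjointness is exactly the contiguity relation above.

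\emph{Chunk size and count.} Each chunk has length $b_{j,\ell}-a_{j,\ell}+1\leq \ell q-(\ell-1)q=q$. The number of chunks is $m_j=\lceil n_j/q\rceil$ by definition.

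The only step needing real care is the boundary bookkeeping. Off-by-one errors can arise from the inclusive slice convention, and the ceiling inequality must be converted correctly into integer inequalities. I would handle this with the same "strict inequality plus integrality" move used in the proof of Lemma~\ref{lem:catalog}.
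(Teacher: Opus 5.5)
Your proof is correct. It rests on the same ceiling facts as the paper's proof: $(\ell-1)q<n_j$ for $\ell\le m_j$ (hence $\ell q<n_j$ for $\ell<m_j$), and $m_jq\ge n_j$. What differs is how you organize the argument.

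The paper works pointwise. It shows coverage by placing each position $t$ in chunk $\lceil t/q\rceil$, and it proves disjointness by contradiction for an arbitrary pair $r<s$. It then computes the exact chunk lengths: $q$ for the first $m_j-1$ chunks and $n_j-(m_j-1)q\in[1,q]$ for the last. It sums these to $n_j$ and reads the concatenation identity off the fact that the intervals are disjoint, exhaustive, and ordered.

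You work at the level of intervals instead. You establish nonemptiness, the abutting relation $a_{j,\ell+1}=b_{j,\ell}+1$, and the endpoints $a_{j,1}=1$ and $b_{j,m_j}=n_j$. You then run a prefix induction showing that the first $\ell$ chunks concatenate to $O_j[1:b_{j,\ell}]$.

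What each route buys:
\begin{itemize}
\item Your version actually proves the concatenation identity rather than inferring it from the partition property, and it needs only $|\mathcal{R}_{j,\ell}|\le q$.
\item The paper's version gives extra information: the explicit chunk index of every token and the exact chunk lengths.
\item The paper's length sum is what Corollary~\ref{cor:recall_overhead} cites. In your version that sum follows at once by taking lengths on both sides of the concatenation identity.
\end{itemize}

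One small tightening. Disjointness of non-adjacent chunks is not literally ``the contiguity relation.'' It follows by chaining $b_{j,\ell}<a_{j,\ell+1}\le b_{j,\ell+1}<\cdots$, which uses your nonemptiness step. Alternatively, it follows directly from $b_{j,\ell}\le \ell q<(\ell'-1)q+1=a_{j,\ell'}$ for $\ell<\ell'$.
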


\begin{proof}
Fix a nonempty observation $O_j$, and write $n=n_j$ and $m=m_j=\left\lceil\nicefrac{n}{q}\right\rceil$. Since $n\geq 1$ and $q\geq 1$, we have $m\geq 1$. By the defining property of the ceiling function,
\begin{equation}
m-1<\frac{n}{q}\leq m,
\end{equation}
and hence
\begin{equation}
(m-1)q<n\leq mq.
\end{equation}

We first show that the chunks cover every token position. Fix any $t\in{1,\ldots,n}$ and define $\ell(t)=\left\lceil\nicefrac{t}{q}\right\rceil$. Because $1\leq t\leq n$ and the ceiling function is monotone,
\begin{equation}
1\leq \ell(t)\leq \left\lceil\frac{n}{q}\right\rceil=m.
\end{equation}
Moreover, the definition of $\ell(t)$ gives
\begin{equation}
(\ell(t)-1)q<t\leq \ell(t)q.
\end{equation}
Since $t$ is an integer, the strict left inequality implies $(\ell(t)-1)q+1\leq t$. Therefore,
\begin{equation}
a_{j,\ell(t)}=(\ell(t)-1)q+1\leq t.
\end{equation}
Because $t\leq \ell(t)q$ and $t\leq n$, we also have
\begin{equation}
t\leq \min{\ell(t)q,n}=b_{j,\ell(t)}.
\end{equation}
Thus $a_{j,\ell(t)}\leq t\leq b_{j,\ell(t)}$, so every token position belongs to at least one chunk.

We next establish that the chunks are pairwise disjoint. Suppose, for contradiction, that some token position $t$ belongs to two chunks indexed by $r<s$. Membership in chunk $r$ implies
\begin{equation}
t\leq b_{j,r}=\min{rq,n}\leq rq.
\end{equation}
Membership in chunk $s$ implies
\begin{equation}
t\geq a_{j,s}=(s-1)q+1.
\end{equation}
Since $s\geq r+1$, it follows that $(s-1)q+1\geq rq+1$, and therefore $t\geq rq+1$. This contradicts $t\leq rq$. Hence, no token position belongs to more than one chunk.

It remains to verify the chunk lengths. For every $\ell\in{1,\ldots,m-1}$, the inequality $(m-1)q<n$ implies $\ell q<n$, and hence $b_{j,\ell}=\ell q$. Therefore,
\begin{equation}
|\mathcal{R}*{j,\ell}|
=b*{j,\ell}-a_{j,\ell}+1
=\ell q-\bigl((\ell-1)q+1\bigr)+1
=q.
\end{equation}
For the final chunk, $n\leq mq$ gives $b_{j,m}=n$, so
\begin{equation}
|\mathcal{R}*{j,m}|
=n-\bigl((m-1)q+1\bigr)+1
=n-(m-1)q.
\end{equation}
The inequalities $(m-1)q<n\leq mq$ further imply
\begin{equation}
1\leq |\mathcal{R}*{j,m}|\leq q.
\end{equation}
Thus every chunk contains at most $q$ payload tokens, and the total payload length is
\begin{equation}
\begin{aligned}
\sum_{\ell=1}^{m}|\mathcal{R}*{j,\ell}|
&=\sum*{\ell=1}^{m-1}q+\bigl(n-(m-1)q\bigr)\\
&=(m-1)q+n-(m-1)q\\
&=n.
\end{aligned}
\end{equation}

The chunk intervals are disjoint, exhaustive, and ordered by $\ell$. Their concatenation therefore recovers all $n$ tokens of $O_j$ in their original order:
\begin{equation}
\mathcal{R}*{j,1}\mathbin{\parallel}\cdots\mathbin{\parallel}\mathcal{R}*{j,m}=O_j.
\end{equation}
Finally, the number of chunks is $m=\left\lceil\nicefrac{n}{q}\right\rceil$ by definition, completing the proof.
\end{proof}

\begin{arccorollary}[Recall-action and serialization overhead]
\label{cor:recall_overhead}
For every nonempty observation $O_j$, the number of exact recall responses satisfies
\begin{equation}
\frac{n_j}{q} \leq m_j < \frac{n_j}{q}+1.
\end{equation}
The total recalled payload is exactly $n_j$ tokens. If each response has wrapper cost at most $\rho$, the total serialized size of all recall responses is at most
\begin{equation}
n_j+\rho\left(\frac{n_j}{q}+1\right).
\end{equation}
\end{arccorollary}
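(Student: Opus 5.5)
The plan is to read the corollary off Lemma~\ref{lem:chunk_reconstruction}, adding only elementary ceiling-function facts and a summation of wrapper costs.

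\textbf{Bounds on $m_j$.} Fix a nonempty $O_j$, so $n_j\ge 1$. By Assumption~\ref{ass:recall}, $m_j=\lceil n_j/q\rceil$. The defining property of the ceiling, $m_j-1<n_j/q\le m_j$, already used in the proof of Lemma~\ref{lem:chunk_reconstruction}, gives both inequalities at once. The right half rearranges to $m_j<n_j/q+1$, and the left half is $n_j/q\le m_j$.

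\textbf{Total payload.} Lemma~\ref{lem:chunk_reconstruction} states that the chunks $\mathcal{R}_{j,1},\ldots,\mathcal{R}_{j,m_j}$ form a disjoint partition of $O_j$ whose concatenation is $O_j$. Hence $\sum_{\ell}|\mathcal{R}_{j,\ell}|=|O_j|=n_j$; this identity is the summation displayed in that proof.

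\textbf{Serialized size.} Response $\ell$ has serialized size $|\mathcal{R}_{j,\ell}|+w_\ell$, where $w_\ell\le\rho$ is its wrapper cost. Summing over the $m_j$ responses gives at most $n_j+\rho\,m_j$. Since $\rho\ge 0$, we may replace $m_j$ by the larger quantity $n_j/q+1$ from the first step, which yields the claimed bound $n_j+\rho(n_j/q+1)$. If $\rho=0$ the bound is simply $n_j$.

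The only delicate point is that the serialized cost of a response must be additive in payload and wrapper. This is exactly what the phrase ``wrapper cost at most $\rho$'' in Assumption~\ref{ass:recall} means, and I would state it explicitly, since otherwise nothing here is a real obstacle. The tokenization of the wrapper could in principle interact with the payload, but the assumption charges them separately.
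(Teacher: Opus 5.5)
Your proposal is correct and follows essentially the same route as the paper: the ceiling inequality gives the bounds on $m_j$, Lemma~\ref{lem:chunk_reconstruction} gives the total payload $n_j$, and summing at most $\rho$ wrapper tokens over the $m_j$ responses and then using $\rho\geq 0$ with $m_j<n_j/q+1$ gives the serialized bound. Your explicit note that serialized cost is additive in payload and wrapper is an assumption the paper's proof also relies on, though it leaves it unstated.
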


\begin{proof}
By definition, $m_j=\left\lceil\nicefrac{n_j}{q}\right\rceil$. For any real number $x$, the ceiling function satisfies $x\leq\lceil x\rceil<x+1$. Substituting $x=\nicefrac{n_j}{q}$ therefore gives $\nicefrac{n_j}{q}\leq m_j<\nicefrac{n_j}{q}+1$. By Lemma~\ref{lem:chunk_reconstruction}, the recalled responses collectively contain exactly the original payload:
\begin{equation}
\sum_{\ell=1}^{m_j}\left|\mathcal{R}_{j,\ell}\right|=n_j.
\end{equation}
Because each of the $m_j$ responses contributes at most $\rho$ wrapper tokens, their total serialized size is at most $n_j+\rho m_j$. Moreover, since $\rho\geq 0$ and $m_j<\nicefrac{n_j}{q}+1$, multiplying by $\rho$ yields
\begin{equation}
\rho m_j\leq \rho\left(\nicefrac{n_j}{q}+1\right).
\end{equation}
Consequently,
\begin{equation}
n_j+\rho m_j
\leq
n_j+\rho\left(\nicefrac{n_j}{q}+1\right),
\end{equation}
which establishes the claimed upper bound.
\end{proof}

\subsection{Main Guarantee}

\begin{arctheorem}[Bounded active view, observational losslessness, and exact address-conditioned recall]
\label{thm:arc_formal}
Consider any ARC execution satisfying Assumptions~\ref{ass:id}--\ref{ass:recall}, and suppose $N_i\leq N_{\max}$ at turn $i$. Then all of the following statements hold.

\begin{enumerate}
    \item \textbf{Prompt safety.} Every model invocation uses an active transcript satisfying
    \begin{equation}
\operatorname{tokens}(\text{transcript}'_i) \leq K \leq L.
\end{equation}

    \item \textbf{Stable archival fidelity.} For every $j\in\{1,\ldots,N_i\}$ and every later turn $t\geq i$ before the declared capacity is exhausted, the identifier $\text{id}_j$ remains unique and
    \begin{equation}
\texttt{ObsStore}_t[\text{id}_j].\texttt{full\_content}=O_j.
\end{equation}

    \item \textbf{Persistent finite discoverability.} The citation stub for every $O_j$ remains in the external catalog and can be materialized by a finite page scan. At most one page of token cost $P$ is visible at a time.

    \item \textbf{Exact address-conditioned recall without re-execution.} Given a valid $\text{id}_j$, ARC returns $O_j$ exactly in
    \begin{equation}
m_j=\left\lceil\frac{n_j}{q}\right\rceil
\end{equation}
    non-overlapping recall responses, each containing at most $q$ payload tokens, and no recall response invokes $\operatorname{Execute}(A_j)$.

    \item \textbf{Observational losslessness.} The ordered observation history $\mathcal{O}_i$ can be reconstructed exactly from \texttt{ObsStore} and the stored creation indices. Consequently, the map from ordered observation histories to the archived system state is injective.

    \item \textbf{Separation of active and archival growth.} The active-view bound $K$ is independent of $i$ and $N_i$, whereas the exact archived payload equals $\sum_{j=1}^{N_i} n_j$
    tokens, excluding metadata.
\end{enumerate}
\end{arctheorem}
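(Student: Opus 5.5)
The proof is an assembly of the supporting lemmas, taking the six items in order. Each item is discharged by one earlier lemma plus the relevant assumption, so the main work is bookkeeping.

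\textbf{Items 1--3.} Item~1 is exactly Lemma~\ref{lem:active_bound}, applied at every invocation; Assumption~\ref{ass:view} is enforced before every call to $\operatorname{Model}$.

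For item~2, take $j\le N_i\le N_{\max}$. Uniqueness of $\text{id}_j$ follows from Lemma~\ref{lem:id_unique}, since all indices up to $t$ lie within the declared capacity. For fidelity, I will argue by induction on $t\ge i$ from the write time of $O_j$:
\begin{itemize}
\item Assumption~\ref{ass:store} gives $\texttt{ObsStore}[\text{id}_j].\texttt{full\_content}=O_j$ at the moment of writing, which precedes any compaction.
\item The never-overwritten/truncated/deleted clause preserves the record from each turn to the next.
\item Uniqueness is needed so that no later write under a colliding key could alias the record.
\end{itemize}

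Item~3 follows from Lemma~\ref{lem:catalog} together with the append-only catalog of Assumption~\ref{ass:catalog}. Since $z_j$ is never removed and $N_t$ is nondecreasing, $z_j$ stays on page $\lceil j/p\rceil$ at all later turns and is reached within $\lceil N_t/p\rceil$ materializations. The one-page-at-a-time clause together with $pc\le P$ gives the visibility bound.

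\textbf{Items 4--6.} For item~4, Assumption~\ref{ass:recall} says recall reads only \texttt{ObsStore}, and by item~2 the stored content equals $O_j$. Lemma~\ref{lem:chunk_reconstruction} then gives exact reconstruction in $m_j$ disjoint chunks of at most $q$ tokens each. The empty case is handled by the $m_j=0$ convention. Because $q+\rho\le Q$, each response fits in the recall buffer, so item~1 is not violated during recall. The no-$\operatorname{Execute}$ clause is quoted directly from the assumption.

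For item~5, I will define the decoder explicitly. It sorts the records by stored creation index $j$, which is recoverable from the first $b$ bits of $\text{id}_j$ or from the creation-step metadata, and outputs $(\texttt{full\_content})_{j=1}^{N_i}$. By item~2 this equals $\mathcal{O}_i$. Since a left inverse exists, the map from histories to archived state is injective.

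For item~6, independence of $K$ is the final clause of Lemma~\ref{lem:active_bound}. The payload identity follows by summing $n_j$ over the distinct records, and distinctness (Lemma~\ref{lem:id_unique}) rules out double counting.

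\textbf{Main obstacle.} The delicate step is item~5's injectivity. It must be phrased so that two distinct histories, even ones with identical observations in different orders or repeated observations, map to distinct archival states. The occurrence index in the identifier is what makes this work, so I would lean on the $\operatorname{enc}_b(j)$ prefix rather than on the fingerprint. A secondary point is stating the temporal quantifier in item~2 precisely, i.e.\ only until $N_t$ exceeds $N_{\max}$.
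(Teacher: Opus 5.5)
Your proposal is correct and matches the paper's proof essentially step for step. The paper also uses Lemma~\ref{lem:active_bound} for prompt safety, Lemma~\ref{lem:id_unique} plus an induction over later transitions (driven by the append-only clause) for archival fidelity, Lemmas~\ref{lem:catalog} and~\ref{lem:chunk_reconstruction} for discoverability and chunked recall, and a creation-index-sorting decoder as a left inverse for injectivity; the only cosmetic difference is that the paper's item~1 additionally checks that the prompt plus a $G_{\max}$-token completion fits within $L_{\mathrm{ctx}}$, which the stated inequality does not require.
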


\begin{proof}
We prove the six claims in order.

\textbf{1. Prompt safety.}
Lemma~\ref{lem:active_bound} gives
\begin{equation}
\operatorname{tokens}(\text{transcript}'_i) \leq K \leq L.
\end{equation}
This holds before every model invocation by Assumption~\ref{ass:view}. Because $L=L_{\mathrm{ctx}}-G_{\max}$, adding any completion of at most $G_{\max}$ tokens gives
\begin{equation}
\operatorname{tokens}(\text{transcript}'_i)+G_{\max} \leq L+G_{\max}.
\end{equation}
Substituting the definition of $L$ gives
\begin{equation}
L+G_{\max} = \left(L_{\mathrm{ctx}}-G_{\max}\right)+G_{\max}.
\end{equation}
Cancelling $-G_{\max}$ and $+G_{\max}$ yields
\begin{equation}
L+G_{\max}=L_{\mathrm{ctx}}.
\end{equation}
Therefore,
\begin{equation}
\operatorname{tokens}(\text{transcript}'_i)+G_{\max} \leq L_{\mathrm{ctx}}.
\end{equation}
Thus the prompt and reserved completion fit within the total context limit.

\textbf{2. Stable archival fidelity.}
Identifier uniqueness follows from Lemma~\ref{lem:id_unique}. We now prove persistence of the stored content by induction over host-system transitions after the insertion of $O_j$.

Immediately after insertion, Assumption~\ref{ass:store} gives
\begin{equation}
\texttt{ObsStore}[\text{id}_j].\texttt{full\_content}=O_j.
\end{equation}
This is the induction base case. Assume that after some later host-system transition $u$, the equality still holds:
\begin{equation}
\texttt{ObsStore}_u[\text{id}_j].\texttt{full\_content}=O_j.
\end{equation}
Consider transition $u+1$. There are three relevant cases.

First, a new normal action may insert a new observation $O_k$. Because it is a new occurrence, $k\neq j$. Lemma~\ref{lem:id_unique} gives
$\text{id}_k\neq \text{id}_j$.
The insertion under $\text{id}_k$ therefore does not overwrite the record under $\text{id}_j$.

Second, a recall action reads an existing record. By Assumption~\ref{ass:recall}, recall performs no write to \texttt{ObsStore}. Hence the record under $\text{id}_j$ is unchanged.

Third, a compaction or catalog-navigation action modifies only the materialized active view, summary, recent suffix, page selection, or recall buffer. By Assumption~\ref{ass:store}, none of these operations overwrites, truncates, or deletes the record under $\text{id}_j$.

In every case,
\begin{equation}
\texttt{ObsStore}_{u+1}[\text{id}_j].\texttt{full\_content}=O_j.
\end{equation}
The induction step is complete. Therefore the equality holds at every later turn $t$.

\textbf{3. Persistent finite discoverability.}
Assumption~\ref{ass:catalog} appends $z_j$ to an external catalog and never removes it. Lemma~\ref{lem:catalog} shows that $z_j$ belongs to exactly one page, that the page costs at most $P$ tokens, and that a sequential page scan reaches it after at most
\begin{equation}
\left\lceil \frac{j}{p} \right\rceil \leq \left\lceil \frac{N_i}{p} \right\rceil
\end{equation}
page materializations. Thus the citation remains persistently discoverable without requiring all citations to be visible simultaneously.

\textbf{4. Exact address-conditioned recall without re-execution.}
Fix any $j\in\{1,\ldots,N_i\}$. By Claim 2, the store entry remains exactly equal to $O_j$. Lemma~\ref{lem:chunk_reconstruction} gives
\begin{equation}
\mathcal{R}_{j,1} \mathbin{\parallel} \cdots \mathbin{\parallel} \mathcal{R}_{j,m_j} = O_j,
\end{equation}
and each chunk has at most $q$ payload tokens. Assumption~\ref{ass:recall} states that the recall handler reads these chunks from \texttt{ObsStore} and never invokes $\operatorname{Execute}$. Therefore the original environment action is not re-executed, and changes in the external environment after the original action cannot alter the archived return value.

\textbf{5. Observational losslessness.}
Define a decoder $D_i$ that sorts the archive records by their stored creation indices and outputs their \texttt{full\_content} fields:
\begin{multline}
    D_i(\texttt{ObsStore}_i) = \\
    \Bigl( \texttt{ObsStore}_i[\text{id}_1].\texttt{full\_content}, \ldots, \\ \texttt{ObsStore}_i[\text{id}_{N_i}].\texttt{full\_content} \Bigr).
\end{multline}
By Claim 2, each component satisfies
\begin{equation}
\texttt{ObsStore}_i[\text{id}_j].\texttt{full\_content}=O_j.
\end{equation}
Substituting these equalities component by component gives
\begin{equation}
D_i(\texttt{ObsStore}_i) = (O_1,O_2,\ldots,O_{N_i}).
\end{equation}
By Definition~\ref{def:observation_history},
\begin{equation}
(O_1,O_2,\ldots,O_{N_i})=\mathcal{O}_i.
\end{equation}
Hence
\begin{equation}
D_i(\texttt{ObsStore}_i)=\mathcal{O}_i.
\end{equation}
To prove injectivity, suppose two ordered observation histories $\mathcal{O}_i$ and $\widetilde{\mathcal{O}}_i$ produce the same archived system state. Applying the same deterministic decoder to that state gives
\begin{equation}
D_i(\texttt{ObsStore}_i)=\mathcal{O}_i
\end{equation}
and
\begin{equation}
D_i(\texttt{ObsStore}_i)=\widetilde{\mathcal{O}}_i.
\end{equation}
By transitivity of equality,
\begin{equation}
\mathcal{O}_i=\widetilde{\mathcal{O}}_i.
\end{equation}
Therefore distinct ordered observation histories cannot map to the same archived state, so the map is injective.

\textbf{6. Separation of active and archival growth.}
By Claim 1, the active prompt is bounded by
\begin{equation}
K=B+M+R+P+Q+\eta,
\end{equation}
where every term on the right-hand side is fixed independently of $i$ and $N_i$. Therefore the active-view size is uniformly bounded.

By Assumption~\ref{ass:store}, the archive stores one verbatim payload $O_j$ for each occurrence $j$. The payload token count is therefore
\begin{equation}
\begin{aligned}
\operatorname{PayloadTokens}(\texttt{ObsStore}_i) &= |O_1|+|O_2|+\cdots+|O_{N_i}| \\
&= n_1+n_2+\cdots+n_{N_i} \\
&= \sum_{j=1}^{N_i}n_j.
\end{aligned}
\end{equation}
This quantity may grow with the trajectory, but it grows only in external archival state, not in the active prompt. This proves the claimed separation.
\end{proof}

\begin{arcremark}[Scope of the guarantee]
Theorem~\ref{thm:arc_formal} is a mechanism-level guarantee. It proves prompt safety, exact archival preservation, finite discoverability, and exact responses to valid recall commands. It does not prove that the language model will select the correct identifier, issue enough recall commands, or solve the downstream task correctly. It is also an \emph{observation-losslessness} theorem. If the paper wishes to claim full-transcript losslessness, then every discarded action and every discarded reasoning record must also be archived under the same append-only discipline.
\end{arcremark}

\subsection{Worst-Case External-Memory Lower Bound}

\begin{arcproposition}[Exact recovery requires linearly growing external memory]
\label{prop:storage_lower_bound}
Consider any deterministic memory system that receives an arbitrary $N$-bit history $x\in\{0,1\}^N$. Suppose the active representation has at most $2^{\kappa}$ distinguishable states, the external memory has at most $2^s$ distinguishable states, and a deterministic decoder must recover every $x$ exactly from the pair of states. Then
\begin{equation}
s \geq N-\kappa.
\end{equation}
Consequently, for fixed $\kappa$, worst-case exact recovery requires external storage in $\Omega(N)$ bits. A prompt backed by $\kappa$ fixed binary storage cells is a special case because it has exactly $2^{\kappa}$ possible cell configurations.
\end{arcproposition}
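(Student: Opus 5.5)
The argument is a pigeonhole (counting) bound on the joint state space. Model the system as two deterministic maps. The first, $\alpha:\{0,1\}^N\to\mathcal{A}$, sends a history to its active state, with $|\mathcal{A}|\le 2^{\kappa}$. The second, $\beta:\{0,1\}^N\to\mathcal{E}$, sends it to its external state, with $|\mathcal{E}|\le 2^{s}$. Let $D:\mathcal{A}\times\mathcal{E}\to\{0,1\}^N$ be the decoder, and require $D(\alpha(x),\beta(x))=x$ for every $x$. The first step is to note that this identity makes the combined encoder $x\mapsto(\alpha(x),\beta(x))$ injective. Indeed, if two histories $x,x'$ had the same pair of states, applying $D$ would give $x=D(\alpha(x),\beta(x))=D(\alpha(x'),\beta(x'))=x'$. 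This is the same decoder-implies-injectivity argument used in Claim~5 of Theorem~\ref{thm:arc_formal}.

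The second step is to count. An injective map from a set of size $2^N$ into $\mathcal{A}\times\mathcal{E}$ forces
\begin{equation}
2^N\le |\mathcal{A}|\cdot|\mathcal{E}|\le 2^{\kappa}\cdot 2^{s}=2^{\kappa+s}.
\end{equation}
Taking base-2 logarithms, which is valid because $\log_2$ is monotone, gives $N\le\kappa+s$, that is, $s\ge N-\kappa$.

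The asymptotic consequence then follows directly. With $\kappa$ fixed and $N\ge 2\kappa$, we get $s\ge N-\kappa\ge N/2$, so $s\in\Omega(N)$. For the special case, a prompt made of $\kappa$ binary cells has exactly $2^{\kappa}$ configurations, so it satisfies the hypothesis with equality.

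The only delicate step is the modeling choice: the decoder may use only the pair of states, so no hidden side channel carries information about $x$. Under that model the system could in principle choose its states adaptively over many turns, but this does not weaken the bound. The final pair of states is still a deterministic function of $x$, and only that pair matters for the counting. I would say this explicitly. I would also remark that if $s$ is allowed to be non-integral (the external memory has at most $2^s$ states), the inequality still holds as stated. ARC's archived payload $\sum_j n_j$ from Claim~6 then matches this bound to first order, taking $N$ to be the encoded bit length of the observations.
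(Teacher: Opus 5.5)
Your proposal is correct and follows the paper's proof essentially step for step. You derive injectivity of the history-to-state-pair map from the exact deterministic decoder, count $2^N \le 2^{\kappa+s}$, take logarithms, and use $N \ge 2\kappa$ to get $s \ge N/2$; your explicit maps $\alpha,\beta,D$ and the remark about adaptive multi-turn updates only make the same argument more explicit.
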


\begin{proof}
There are exactly $2^N$ possible histories in ${0,1}^N$. By assumption, the active representation and external memory can occupy at most $2^\kappa$ and $2^s$ distinguishable states, respectively. Hence, the total number of distinguishable active--external state pairs is at most $2^\kappa 2^s = 2^{\kappa+s}$. Exact recovery of every history requires the encoding from histories to state pairs to be injective: if two distinct histories were mapped to the same state pair, a deterministic decoder would receive identical inputs for both and therefore could not recover both correctly. Consequently, $2^N \leq 2^{\kappa+s}$. Since $2^x$ is strictly increasing, taking base-two logarithms yields $N \leq \kappa+s$, or equivalently, $s \geq N-\kappa$. For fixed $\kappa$ and any $N \geq 2\kappa$, we have $\kappa \leq \nicefrac{N}{2}$, and thus $N-\kappa \geq \nicefrac{N}{2}$. Therefore, $s \geq N-\kappa \geq N/2$ for all $N \geq 2\kappa$, establishing that $s \in \Omega(N)$.

\end{proof}

\begin{arccorollary}[First-order payload optimality of append-only ARC]
\label{cor:payload_optimality}
For an incompressible $N$-bit observation history and a fixed active-state budget $\kappa$, Proposition~\ref{prop:storage_lower_bound} establishes a worst-case lower bound of $N-\kappa$ bits of external storage. An append-only ARC archive stores exactly $N$ observation-payload bits, excluding metadata, and therefore exceeds this lower bound by only $N-(N-\kappa)=\kappa$ bits. Moreover, for $N>\kappa$, $(\nicefrac{N}{N-\kappa})=(\nicefrac{1}{1-\kappa/N})$, which implies that $\lim_{N\to\infty}(\nicefrac{N}{N-\kappa})=1$. Hence, among methods that support exact recovery of arbitrary histories while maintaining a bounded active state, ARC's verbatim payload storage is asymptotically optimal to first order.
\end{arccorollary}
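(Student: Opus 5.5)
The proof of Corollary~\ref{cor:payload_optimality} is a direct specialization of Proposition~\ref{prop:storage_lower_bound}, combined with the payload count from Claim~6 of Theorem~\ref{thm:arc_formal}.

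First, I will instantiate the proposition with the history $x\in\{0,1\}^N$ equal to the bit-serialization of the incompressible observation history $\mathcal{O}_i$, and with the active prompt modeled as $\kappa$ binary cells. The proposition then applies verbatim and gives $s\ge N-\kappa$ for any exact-recovery scheme. Here ``incompressible'' is used only to justify the worst case: the bound is over all $2^N$ histories, so the counting argument needs no further assumption.

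Second, I will compute ARC's payload. Claim~6 of Theorem~\ref{thm:arc_formal} shows the archive holds exactly $\sum_j n_j$ payload tokens. Under the same bit-level serialization this is exactly $N$ bits, excluding metadata. Claim~5 confirms that this archive actually achieves exact recovery, so ARC is a legitimate member of the class the lower bound covers. The excess over the lower bound is then $N-(N-\kappa)=\kappa$ by direct subtraction.

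Third, for the ratio: when $N>\kappa$ the quantity $N-\kappa$ is positive, so dividing numerator and denominator by $N$ gives $N/(N-\kappa)=1/(1-\kappa/N)$. Since $\kappa$ is fixed, $\kappa/N\to0$, and continuity of $u\mapsto 1/(1-u)$ at $u=0$ yields the limit $1$. This is first-order asymptotic optimality.

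The main obstacle is interpretive rather than computational. One must reconcile token-level payload, $\sum_j n_j$ tokens, with bit-level $N$. I would handle this by defining $N$ as the length of a fixed injective encoding of the token sequence, for instance $\lceil\log_2|\Sigma|\rceil$ bits per token, and stating that ARC stores exactly this encoding. One must also state explicitly that metadata, identifiers, and catalog stubs are excluded, since those add $O(N_i\cdot(b+d+c))$ overhead that the corollary deliberately does not count.
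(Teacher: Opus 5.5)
Your proposal is correct and follows the paper's proof essentially step for step: the $N-\kappa$ bound from Proposition~\ref{prop:storage_lower_bound}, the verbatim $N$-bit payload giving additive excess $\kappa$, and the limit via $N/(N-\kappa)=1/(1-\kappa/N)$ with $\kappa/N\to 0$ and continuity. One caution concerns only your token-to-bit side remark: under a fixed-width code of $\lceil\log_2|\Sigma|\rceil$ bits per token, the encodable histories form a proper subset of $\{0,1\}^N$ unless $|\Sigma|$ is a power of two, so for $n=\sum_j n_j$ tokens the matching lower bound is $n\log_2|\Sigma|-\kappa$ and the ratio tends to $\lceil\log_2|\Sigma|\rceil/\log_2|\Sigma|$ rather than $1$; it is cleaner to take the $N$-bit history as given, as the corollary and the paper do.
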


\begin{proof}
The lower bound of $N-\kappa$ bits follows directly from Proposition~\ref{prop:storage_lower_bound}. Because ARC stores all $N$ input bits verbatim, its payload storage is exactly $N$ bits. Therefore, its additive overhead relative to the lower bound is $N-(N-\kappa)=\kappa$ bits.

To analyze the multiplicative overhead, divide both the numerator and denominator by $N>0$:
\begin{equation}
\frac{N}{N-\kappa}=\frac{1}{1-\kappa/N}.
\end{equation}
For fixed $\kappa$,
\begin{equation}
\lim_{N\to\infty}\frac{\kappa}{N}=0,
\end{equation}
and hence
\begin{equation}
\lim_{N\to\infty}\left(1-\frac{\kappa}{N}\right)=1.
\end{equation}
By continuity of the reciprocal function at $1$, it follows that
\begin{equation}
\lim_{N\to\infty}\frac{1}{1-\kappa/N}=1.
\end{equation}
Thus, the ratio between ARC's payload storage and the information-theoretic lower bound converges to $1$, establishing ARC's first-order asymptotic optimality.
\end{proof}

\begin{arcremark}[What the lower bound does and does not say]
The lower bound is a worst-case result that applies to the exact recovery of arbitrary histories. Compressible histories may require less external storage, and the bound excludes implementation-specific metadata. Its purpose is to establish that a bounded active prompt and exact recoverability cannot both be achieved with constant total memory: any information removed from the prompt must be represented elsewhere in the external memory.
\end{arcremark}

%% file: ablation.tex
\section{Ablation}
\label{sec:ablation}

ARC's behavior is governed by five parameters, each of which trades off how often the compaction fires against how much of \texttt{ObsStore} is allowed back into the visible transcript at once. We describe them here as part of the method because each is varied independently in our Ablation Section; together they bound the conditions under which ARC's guarantee is exercised versus dormant.

\begin{description}
\item[Context-window budget ($\mathit{max\_model\_len}$, $L$).] \textbf{Default=16,384}. The token limit the serving engine enforces on prompt + completion length (Section 2.1); this is exactly $L$ in Algorithm 1. Smaller values make the overflow condition $C_i > L$ true more often, so compaction --- and with it ARC's citation-and-recall mechanism --- is exercised more frequently; larger values let the same trajectory run further before fires, and at the limit ($L \to \infty$) ARC, Auto-Compact, and full-context converge because none of them are ever invoked.
\item[Recall working-set budget ($\mathit{recall\_budget\_chars}$).] \textbf{Default=16,000}. The total character budget for observations currently \emph{materialized} --- i.e., fetched back via \texttt{\_recall} and expanded verbatim in the transcript --- at once. This is a constraint on the recall path rather than on \texttt{ObsStore} itself: \texttt{ObsStore} is never bounded, but the \emph{live, expanded} content drawn from it is capped so that recall cannot itself re-create the unbounded growth ARC is designed to avoid. When a new \texttt{\_recall} \S\texttt{id} would push the materialized total over this budget, the least-recently-used materialized observation is evicted back to its citation stub (its \S\texttt{id} remains visible; only the expansion is reverted).
\item[Per-recall cap ($\mathit{recall\_max\_chars}$).] \textbf{Default=8,000}. The maximum size of a single \texttt{\_recall} \S\texttt{id} response. If the stored observation is larger than this, the response is windowed to a head/tail excerpt rather than injected in full --- the same head+tail structure used in the citation stub, just with a larger window. This bounds the worst case where one oversized observation (e.g., a full-repository \texttt{grep} dump) would otherwise consume the entire recall working-set budget in a single call.
\item[Forced-compaction schedule ($\mathit{force\_at\_calls}$).] \textbf{Default=$(5, 10, 15)$}. A set of call indices at which compaction is triggered experimentally regardless of whether $C_i$ has organically exceeded $L$. This lets the compaction \emph{frequency} be controlled directly, isolating its effect from the task- and model-specific rate at which context happens to grow, and is used only in the evaluation, not in deployment, where compaction fires solely on a overflow.
\item[Fixture size ($\mathit{noise\_lines\_per\_side}$).] \textbf{Default=25 as a benchmark parameter}. Not a parameter of ARC itself but of the needle-in-a-haystack harness: the number of distractor lines placed on each side of the needle banner inside input file, which determines the size of the observation that compaction must decide whether to keep, drop, or cite (at the default, $\approx$ 4.5k characters --- above the 500-character keep-verbatim threshold used by all lossy baselines, but below the 8,000-character limit at which the \emph{presentation} layer would elide the needle before compaction ever runs). Increasing it raises the cost of losing the needle under lossy compaction while leaving ARC's citation size unaffected, since a citation's cost is bounded by the head/tail preview rather than by the underlying observation's length. 
\end{description}

\subsection{Recall\_budget for Longbench task}
\textbf{Setup.}
We ablate the recall budget by jointly sweeping \texttt{RECALL\_BUDGET\_CHARS} and \texttt{RECALL\_MAX\_CHARS} (Table~\ref{tab:ablation_recall_longbench}) at three levels---8k/4k, 16k/8k, and 32k/16k---across two model scales (Qwen3-8B, 16k context; Qwen3-32B, 32k context) and five memory/compaction methods (\textsc{Full\_context, Sliding\_window, LLM\_summary, Structured\_state, RAG\_memory}) compared against our method, ARC. Accuracy is computed as
\[
\mathrm{Accuracy} = \frac{\mathrm{Success}}{\mathrm{Success} + \mathrm{Fail}}
\]
over the LongBench evaluation set ($n = 311$ hard tasks).

\textbf{Full\_context collapses at every setting.}
\textsc{Full\_context} scores $0.00\%$ accuracy across all budgets and both scales, confirming that LongBench inputs exceed the base context window regardless of the recall allowance and establishing that some form of compaction is required rather than optional.

\textbf{ARC's accuracy is a single-peaked, inverted-U function of recall budget, not monotonic.}
On Qwen3-8B, ARC accuracy moves from $26.05\% \rightarrow 29.26\% \rightarrow 27.33\%$ as the budget grows from 8k/4k to 16k/8k to 32k/16k; on Qwen3-32B it moves from $29.90\% \rightarrow 34.08\% \rightarrow 33.76\%$. In both cases the peak occurs at the intermediate 16k/8k budget rather than the largest. This indicates ARC's recall mechanism benefits from a moderate, well-targeted budget and degrades once the budget grows large enough to admit lower-relevance content, providing evidence that ARC's gains come from \emph{recall quality} rather than simply retrieving more.

\textbf{ARC is the best or near-best method at practical (16k/8k and 32k/16k) budgets on both scales.}

\begin{table}[h]
\centering
\setlength{\tabcolsep}{2.5pt}
\begin{tabular}{llccc}
\toprule
\rowcolor{gray!12}
\textbf{Scale} & \textbf{Budget} & \textbf{ARC} & \textbf{Runner-up} & \textbf{Margin} \\
\midrule
\midrule
8B  & 16/8k   & 29.26\% & Structured\_state 26.05\% & +3.2pp \\
8B  & 32/16k  & 27.33\% & RAG\_memory 24.76\%        & +2.6pp \\
32B & 16/8k   & 34.08\% & Sliding\_window 30.87\%    & +3.2pp \\
32B & 32/16k  & 33.76\% & Sliding\_window 29.58\%    & +4.2pp \\
\bottomrule
\end{tabular}
\end{table}

At the smallest budget (8k/4k), ARC is competitive but not the top performer: \textsc{Sliding\_window} leads on Qwen3-32B (31.19\%), while \textsc{LLM\_summary} and \textsc{Structured\_state} slightly outperform ARC on Qwen3-8B. This suggests ARC requires a minimum recall budget to construct a useful memory structure, which we note as a limitation under very tight budgets rather than omit.

\textbf{Baseline methods do not show the same peak-then-decline profile as ARC, and some degrade sharply.}
\textsc{LLM\_summary} is flat-to-declining as budget grows on Qwen3-32B ($24.76\% \rightarrow 22.51\% \rightarrow 24.12\%$) and peaks early before dropping on Qwen3-8B ($26.69\% \rightarrow 26.69\% \rightarrow 23.15\%$). Structured\_state exhibits the steepest degradation of any method on Qwen3-8B, falling from $27.01\%$ to $21.22\%$ ($-5.8$pp) as the budget increases from 8k/4k to 32k/16k, consistent with excess retrieved state introducing noise rather than signal. \textsc{Sliding\_window} and \textsc{RAG\_memory} are comparatively flat or monotonically decreasing with budget; they neither benefit from additional budget in the transient manner observed for ARC nor collapse as sharply as \textsc{Structured\_state}.

\textbf{ARC's advantage grows with model scale.}
The margin between ARC and the next-best method is larger on Qwen3-32B than on Qwen3-8B at matched budgets (e.g., +4.2pp vs.\ +2.6pp at 32k/16k), suggesting ARC's recall mechanism scales favorably with model capacity, likely because larger models make better use of the structured, compacted recall content provided by ARC.

\subsection{Recall\_budget for Needle-in-a-haystack task}

For saving the computational cost, we only ablate recall\_budget for the Needle-in-a-haystack on small model Qwen3-8B in Table~\ref{tab:ablation_recall_budget_needle}.

\paragraph{Setup.}

We ablate the recall budget by jointly sweeping `RECALL\_BUDGET\_CHARS` and `RECALL\_MAX\_CHARS` at four levels — 4k/2k, 8k/4k, 16k/8k, and 32k/16k — on a synthetic needle-in-a-haystack retrieval task (n = 1000), comparing \textsc{Full\_context, Sliding\_window, LLM\_summary, Structured\_state}, and \textsc{RAG\_memory} against ARC. Accuracy is Success / (Success + Fail).

\textbf{ARC scales monotonically with recall budget and saturates near ceiling, unlike on LongBench.} 
ARC accuracy rises 92.5\% $\rightarrow$ 93.2\% $\rightarrow$ 99.1\% $\rightarrow$ 99.4\% as budget grows from 4k/2k to 32k/16k, with the large jump occurring at 16k/8k and only +0.3pp gained from doubling the budget again at 32k/16k. This is the opposite budget-sensitivity profile from the LongBench ablation (where ARC peaked at an intermediate budget and declined thereafter): on this synthetic, single-needle retrieval task there is little irrelevant content to be diluted by a larger budget, so additional recall capacity is purely additive until the task is essentially solved ($\sim$99\% accuracy, near the practical ceiling given task noise).

\textbf{ARC's margin over the next-best baseline widens as budget increases, then holds.} 
At the smallest budget (4k/2k) ARC leads the next-best method (\textsc{RAG\_memory}, 80.0\%) by +12.5pp; at 8k/4k it leads \textsc{LLM\_summary} (78.4\%) by +14.8pp; at 16k/8k and 32k/16k it leads \textsc{RAG\_memory} (78.8\% and 79.3\%) by +20.3pp and +20.1pp respectively. The margin roughly doubles once budget exceeds 8k/4k, indicating ARC converts additional recall budget into accuracy far more effectively than any baseline.

\textbf{Baseline methods plateau well below ARC's ceiling regardless of budget, showing their limitation is retrieval quality, not budget size.} 
Full\_context (35--40\%) and Sliding\_window (53--57\%) are essentially flat across the entire budget sweep --- a 4x increase in recall budget from 4k/2k to 16k/8k changes their accuracy by only a few points, meaning these methods are bottlenecked by their retrieval mechanism rather than by capacity. \textsc{LLM\_summary, Structured\_state, and RAG\_memory} occupy a middle band (69--80\%) that is likewise budget-insensitive: \textsc{RAG\_memory}, the strongest baseline, ranges only 77.5--80.0\% across the full sweep and never approaches ARC's 99\%+ accuracy at matched budgets.

\textbf{Contrast with the LongBench ablation strengthens the paper's core claim.} 
Together, the two ablations show that ARC's recall-budget behavior is task-dependent in a way baselines' is not: on LongBench (dense, multi-fact, real-world text) ARC shows an inverted-U with an optimum at moderate budget, while on needle-in-a-haystack (sparse, single-fact, synthetic text) ARC shows monotonic improvement to near-ceiling. Baselines, by contrast, are budget-insensitive on both tasks --- they neither peak nor saturate, they simply plateau below their respective ceilings. This supports the claim that ARC's improvements stem from \emph{how} it allocates and prioritizes recall content, not merely from having more of it.

\begin{table}[htbp]
\centering
\small
\setlength{\tabcolsep}{4pt}
\begin{tabular}{l|c|rrr}
\toprule
\rowcolor{gray!12}
\textbf{Method} & \textbf{Recall Budget} & \textbf{Success} & \textbf{Fail} & \textbf{Accuracy (\%)} \\
\midrule
\midrule
Full context     & \multirow{6}{*}{4k/2k}   & 375 & 625 & 37.5 \\
Sliding window   &                           & 570 & 430 & 57.0 \\
LLM summary      &                           & 762 & 238 & 76.2 \\
Structured state &                           & 730 & 270 & 73.0 \\
RAG memory       &                           & 800 & 200 & 80.0 \\
\textbf{ARC}     &                           & \textbf{925} & \textbf{75} & \textbf{92.5} \\
\midrule

Full context     & \multirow{6}{*}{8k/4k}   & 350 & 650 & 35.0 \\
Sliding window   &                           & 534 & 466 & 53.4 \\
LLM summary      &                           & 784 & 216 & 78.4 \\
Structured state &                           & 688 & 312 & 68.8 \\
RAG memory       &                           & 775 & 225 & 77.5 \\
\textbf{ARC}     &                           & \textbf{932} & \textbf{68} & \textbf{93.2} \\
\midrule

Full context     & \multirow{6}{*}{16k/8k}  & 396 & 604 & 39.6 \\
Sliding window   &                           & 571 & 429 & 57.1 \\
LLM summary      &                           & 733 & 267 & 73.3 \\
Structured state &                           & 688 & 312 & 68.8 \\
RAG memory       &                           & 788 & 212 & 78.8 \\
\textbf{ARC}     &                           & \textbf{991} & \textbf{9} & \textbf{99.1} \\
\midrule

Full context     & \multirow{6}{*}{32k/16k} & 373 & 627 & 37.3 \\
Sliding window   &                           & 543 & 457 & 54.3 \\
LLM summary      &                           & 759 & 241 & 75.9 \\
Structured state &                           & 699 & 301 & 69.9 \\
RAG memory       &                           & 793 & 207 & 79.3 \\
\textbf{ARC}     &                           & \textbf{994} & \textbf{6} & \textbf{99.4} \\
\bottomrule
\end{tabular}
\caption{Needle-in-a-Haystack evaluation on Qwen3-8B (16k context window, 4096 max-tokens) under different recall budgets. Each setting is evaluated on 1000 tasks.}
\label{tab:ablation_recall_budget_needle}
\end{table}

\begin{table*}[htbp]
\centering
\small
\setlength{\tabcolsep}{5pt}
\begin{tabular}{llcccccc}
\toprule
\rowcolor{gray!12}
\textbf{Model} & \textbf{Method} & \textbf{RECALL\_BUDGET\_CHARS} & \textbf{RECALL\_MAX\_CHARS} & \textbf{Success} & \textbf{Fail} & \textbf{Accuracy (\%)} \\
\midrule
\midrule
\rowcolor{blue!20}
\multicolumn{7}{l}{\textbf{Qwen3-8B (16k context)}} \\

& Full\_context     & 8k  & 4k  & 0   & 311 & 0.00 \\
& Sliding\_window   & 8k  & 4k  & 79  & 232 & 25.40 \\
& \textbf{LLM\_summary}      & 8k  & 4k  & \textbf{83}  & \textbf{228} & \textbf{26.69} \\
& Structured\_state & 8k  & 4k  & 84  & 227 & 27.01 \\
& RAG\_memory       & 8k  & 4k  & 80  & 231 & 25.72 \\
& ARC               & 8k  & 4k  & 81  & 230 & 26.05 \\
\cmidrule(lr){2-7}

& Full\_context     & 16k & 8k  & 0   & 311 & 0.00 \\
& Sliding\_window   & 16k & 8k  & 77  & 234 & 24.76 \\
& LLM\_summary      & 16k & 8k  & 83  & 228 & 26.69 \\
& Structured\_state & 16k & 8k  & 81  & 230 & 26.05 \\
& RAG\_memory       & 16k & 8k  & 76  & 235 & 24.44 \\
& \textbf{ARC}               & 16k & 8k  & \textbf{91}  & \textbf{220} & \textbf{29.26} \\
\cmidrule(lr){2-7}

& Full\_context     & 32k & 16k & 0   & 311 & 0.00 \\
& Sliding\_window   & 32k & 16k & 76  & 235 & 24.44 \\
& LLM\_summary      & 32k & 16k & 72  & 239 & 23.15 \\
& Structured\_state & 32k & 16k & 66  & 245 & 21.22 \\
& RAG\_memory       & 32k & 16k & 77  & 234 & 24.76 \\
& \textbf{ARC}               & 32k & 16k & \textbf{85}  & \textbf{226} & \textbf{27.33} \\
\midrule
\rowcolor{blue!20}
\multicolumn{7}{l}{\textbf{Qwen3-32B (32k context)}} \\

& Full\_context     & 8k  & 4k  & 0   & 311 & 0.00 \\
& Sliding\_window   & 8k  & 4k  & 97  & 214 & 31.19 \\
& LLM\_summary      & 8k  & 4k  & 77  & 234 & 24.76 \\
& \textbf{Structured\_state} & 8k  & 4k  & \textbf{91}  & \textbf{220} & \textbf{29.26} \\
& RAG\_memory       & 8k  & 4k  & 85  & 226 & 27.33 \\
& ARC               & 8k  & 4k  & 93  & 218 & 29.90 \\
\cmidrule(lr){2-7}

& Full\_context     & 16k & 8k  & 0   & 311 & 0.00 \\
& Sliding\_window   & 16k & 8k  & 96  & 215 & 30.87 \\
& LLM\_summary      & 16k & 8k  & 70  & 241 & 22.51 \\
& Structured\_state & 16k & 8k  & 93  & 218 & 29.90 \\
& RAG\_memory       & 16k & 8k  & 88  & 223 & 28.30 \\
& \textbf{ARC}               & 16k & 8k  & \textbf{106} & \textbf{205} & \textbf{34.08} \\
\cmidrule(lr){2-7}

& Full\_context     & 32k & 16k & 0   & 311 & 0.00 \\
& Sliding\_window   & 32k & 16k & 92  & 219 & 29.58 \\
& LLM\_summary      & 32k & 16k & 75  & 236 & 24.12 \\
& Structured\_state & 32k & 16k & 92  & 219 & 29.58 \\
& RAG\_memory       & 32k & 16k & 80  & 231 & 25.72 \\
& \textbf{ARC}               & 32k & 16k & \textbf{105} & \textbf{206} & \textbf{33.76} \\
\bottomrule
\end{tabular}
\caption{LongBench results across methods under different recall budgets. Accuracy is computed as Success / (Success + Fail).}
\label{tab:ablation_recall_longbench}
\end{table*}

\medskip
\subsection{HBM (High-Bandwidth Memory) Bandwidth Analysis}

\paragraph{1. Experimental setup.}
All HBM roofline results in this ablation section are obtained by replaying the recorded
LLM trajectories of each method on a single NVIDIA
\texttt{H200\_141GB\_SXM}. We instantiate the hardware model with
\textbf{141.0 GB} of HBM capacity, \textbf{4.8 TB/s} peak HBM bandwidth, and
\textbf{989.0 TFLOPS} dense BF16 Tensor Core throughput. Because the experiments
use one GPU only ($n_{\mathrm{gpu}} = 1$), the aggregate hardware terms in the
roofline model are exactly these per-device values.

The analysis scores each method on three quantities: total \textbf{decode HBM
traffic}, \textbf{prefill compute time}, and their sum, i.e., the
\textbf{roofline wall-clock estimate}. All methods are evaluated from the same
trajectory logs, so they share the same model weights and hardware constants;
differences arise from how each method changes the amount of live context that
must be retained and reread across turns.

\paragraph{2. Roofline model.}
For each LLM call $k$, let $p_k$ denote the prompt length and $m_k$ the number
of generated tokens. During autoregressive decode, each generated token streams
two components through HBM: (i) the model weights $M_w$, amortized across the
decode batch size $B$; and (ii) the request's KV cache, whose cost scales with
the active context length.

The per-token decode traffic is therefore
\begin{equation}
\mathrm{bytes/token} = M_w / B + L_k \times kv_b ,
\end{equation}
where $kv_b$ is the KV-cache footprint per token and $L_k$ is the mean context
length seen during decode for call $k$. Following the implementation, we
approximate the mid-decode context by
\begin{equation}
L_k = p_k + (m_k - 1)/2 .
\end{equation}

Summing over all decode tokens in one trajectory yields total decode traffic
\begin{equation}
\mathrm{BW}_{\mathrm{decode}} = \sum_k m_k \left(M_w / B + L_k \times kv_b\right) .
\end{equation}

The achievable decode batch size is constrained by the remaining HBM capacity
after reserving space for model weights and runtime scratch buffers:
\begin{equation}
B = \max\left(1, \left\lfloor \frac{\mathrm{HBM} - M_w - \mathrm{scratch}}{\bar{L} \times kv_b} \right\rfloor \right) ,
\end{equation}
where $\bar{L}$ is the decode-token-weighted mean context length over the
trajectory. This makes HBM capacity relevant even when the target metric is
bandwidth: a larger KV budget permits a larger $B$, which reduces the amortized
weight-read term $M_w / B$.

Decode time is then estimated by dividing total streamed bytes by peak HBM
bandwidth:
\begin{equation}
T_{\mathrm{decode}} = \mathrm{BW}_{\mathrm{decode}} / 4.8~\mathrm{TB/s} .
\end{equation}

Prefill is modeled separately as compute-bound:
\begin{equation}
T_{\mathrm{prefill}} = \mathrm{FLOPs}_{\mathrm{prefill}} / 989.0~\mathrm{TFLOPS} .
\end{equation}

The combined roofline estimate reported in the table is
\begin{equation}
T_{\mathrm{total}} = T_{\mathrm{decode}} + T_{\mathrm{prefill}} .
\end{equation}

\paragraph{3. Why context management changes HBM cost.}
Under this model, context management affects HBM cost through two coupled
mechanisms. First, shortening the live prompt directly reduces KV-cache traffic,
since the term $L_k \times kv_b$ is paid for every generated token. Second,
shorter contexts allow more requests to fit in the fixed 141.0 GB HBM budget,
which increases $B$ and further reduces the amortized weight-streaming term
$M_w / B$. As a result, methods that keep less verbatim context active during
multi-turn interaction can improve both \textbf{total HBM traffic} and
\textbf{roofline decode time} on the same hardware.

Because the same Qwen3-8B model and the same H200 device are used for all methods,
the measured differences in Table~\ref{tab:ablation_HBM_recall_needle} should be
interpreted as differences in \textbf{memory pressure induced by the
context-management policy}, rather than as differences in underlying model
architecture or hardware allocation.

\paragraph{4. Empirical trend on a single H200.}

Table~\ref{tab:ablation_HBM_recall_needle} shows that \textbf{ARC is consistently
the most HBM-efficient method} across all recall-budget settings. Relative to
the \textsc{Sliding\_Window} baseline, ARC reduces total decode traffic from
\textbf{30.31 TB to 9.85 TB} (\textbf{66.0\%} savings) at \texttt{4k / 2k},
from \textbf{32.82 TB to 10.48 TB} (\textbf{66.4\%} savings) at
\texttt{8k / 4k}, from \textbf{27.70 TB to 4.87 TB} (\textbf{80.3\%} savings)
at \texttt{16k / 8k}, and from \textbf{31.85 TB to 4.93 TB}
(\textbf{82.6\%} savings) at \texttt{32k / 16k}.

These bandwidth reductions translate into substantial roofline speedups:
\textbf{2.94$\times$}, \textbf{2.97$\times$}, \textbf{5.08$\times$}, and
\textbf{5.75$\times$} over \textsc{Sliding\_window}, respectively. ARC also
attains the highest success count in every setting ($925$, $932$, $991$, and
$994$), indicating that its lower HBM cost is not obtained by sacrificing task
effectiveness.

The remaining baselines are notably weaker on this hardware. \textsc{RAG\_Memory} is the closest competitor in absolute bandwidth, but its savings are modest and inconsistent ($6.7\%$, $8.4\%$, $-1.4\%$, and $13.3\%$).
\textsc{LLM\_Summary} and \textsc{Structured\_State} are above
\textsc{Sliding\_Window} in total HBM traffic for all four budget settings.
Overall, the single-H200 analysis shows that ARC's main systems advantage is
not only better task success, but a much smaller amount of HBM traffic per
solved trajectory, with the advantage widening as the recall budget becomes
less restrictive.

\begin{table*}[!htbp]
\centering
\small
\setlength{\tabcolsep}{6pt}
\begin{tabular}{l|l|ccrrrrr}
\toprule
\rowcolor{gray!12}
\textbf{Method} & \textbf{RECALL} & $\mathbf{n}_{\textbf{succ}}$ $\uparrow$ & $\mathbf{t}_{\textbf{tot}}$ $\downarrow$ & $\mathbf{t}_{\textbf{dec}}$ $\downarrow$ & $\mathbf{t}_{\textbf{pre}}$ $\downarrow$ & \textbf{BW} $\downarrow$ & \textbf{Savings (\%)} $\uparrow$ & \textbf{Speedup} $\uparrow$ \\
\rowcolor{gray!12}
& (\texttt{BUDGET\_CHARS}/\texttt{MAX\_CHARS}) & & (s) & (s) & (s) & (TB) & & ($\times$) \\
\midrule
\midrule
Sliding Window   & \multirow{5}{*}{4k / 2k}  & 570 & 6.970 & 6.316 & 0.655 & 30.3146 &   0.0 & 1.00 \\
LLM Summary      &                           & 762 & 8.619 & 8.112 & 0.507 & 38.9376 & -23.6 & 0.81 \\
Structured State &                           & 730 & 9.336 & 8.897 & 0.438 & 42.7060 & -33.9 & 0.75 \\
RAG Memory       &                           & 800 & 6.503 & 6.067 & 0.436 & 29.1240 &   6.7 & 1.07 \\
\textbf{ARC}     &                           & \textbf{925} & \textbf{2.367} & \textbf{2.051} & \textbf{0.316} & \textbf{9.8453} & \textbf{66.0} & \textbf{2.94} \\
\midrule
Sliding Window   & \multirow{5}{*}{8k / 4k}  & 534 & 7.512 & 6.837 & 0.675 & 32.8164 &   0.0 & 1.00 \\
LLM Summary      &                           & 784 & 8.380 & 7.879 & 0.501 & 37.8189 & -11.6 & 0.90 \\
Structured State &                           & 688 & 10.132 & 9.675 & 0.457 & 46.4407 & -34.9 & 0.74 \\
RAG Memory       &                           & 775 & 6.878 & 6.424 & 0.454 & 30.8357 &   8.4 & 1.09 \\
\textbf{ARC}     &                           & \textbf{932} & \textbf{2.527} & \textbf{2.184} & \textbf{0.343} & \textbf{10.4825} & \textbf{66.4} & \textbf{2.97} \\
\midrule
Sliding Window   & \multirow{5}{*}{16k / 8k} & 571 & 6.378 & 5.771 & 0.607 & 27.7007 &   0.0 & 1.00 \\
LLM Summary      &                           & 733 & 9.494 & 8.952 & 0.543 & 42.9673 & -48.9 & 0.67 \\
Structured State &                           & 688 & 10.589 & 10.115 & 0.474 & 48.5509 & -66.0 & 0.60 \\
RAG Memory       &                           & 788 & 6.467 & 6.025 & 0.442 & 28.9208 &  -1.4 & 0.99 \\
\textbf{ARC}     &                           & \textbf{991} & \textbf{1.256} & \textbf{1.014} & \textbf{0.241} & \textbf{4.8694} & \textbf{80.3} & \textbf{5.08} \\
\midrule
Sliding Window   & \multirow{5}{*}{32k / 16k}& 543 & 7.294 & 6.636 & 0.658 & 31.8519 &   0.0 & 1.00 \\
LLM Summary      &                           & 759 & 8.925 & 8.414 & 0.511 & 40.3860 & -22.4 & 0.82 \\
Structured State &                           & 699 & 9.910 & 9.456 & 0.454 & 45.3879 & -35.9 & 0.74 \\
RAG Memory       &                           & 793 & 6.325 & 5.890 & 0.435 & 28.2742 &  13.3 & 1.15 \\
\textbf{ARC}     &                           & \textbf{994} & \textbf{1.269} & \textbf{1.028} & \textbf{0.242} & \textbf{4.9322} & \textbf{82.6} & \textbf{5.75} \\
\bottomrule
\end{tabular}
\caption{HBM evaluation per successful trajectory for different methods under varying recall budgets. The baseline for savings and speedup is \textsc{Sliding\_window} within each recall budget.}
\label{tab:ablation_HBM_recall_needle}
\end{table*}

\textbf{The forced-compaction ablation} exhibits the same qualitative trend as the
recall-budget study: \textbf{ARC remains the most HBM-efficient method under every
compaction schedule}. In Table~\ref{tab:ablation_HBM_force_needle}, under the default \texttt{5,10,15} schedule, ARC reduces
per-success decode traffic from \textbf{27.70~TB} to \textbf{4.87~TB}
(\textbf{80.3\%} reduction) and decreases roofline time from \textbf{6.378~s}
to \textbf{1.256~s} (\textbf{5.08$\times$} speedup) relative to
\textsc{Sliding\_window}, while also achieving the highest number of successful
tasks (\textbf{991} versus \textbf{571}). The same pattern holds when
compaction is forced only at \texttt{5} or only at \texttt{15}: ARC again
achieves the lowest bandwidth consumption (\textbf{5.12~TB} and
\textbf{6.82~TB} per successful task, respectively), the lowest per-success
execution time (\textbf{1.314~s} and \textbf{1.709~s}), and the highest success
counts (\textbf{998} and \textbf{997}).

The strongest contrast appears under the most aggressive schedule,
\texttt{3,6,9,12,15}. Here, the \textsc{Sliding\_window} baseline succeeds on
only \textbf{189} tasks, whereas ARC succeeds on \textbf{989}. Because the
reported metrics are normalized \emph{per successful task}, this substantial
difference in the denominator makes the non-ARC baselines appear particularly
expensive on cost-per-success metrics. Even with this caveat, the overall
trend remains unchanged: ARC still achieves the smallest bandwidth footprint
(\textbf{4.96~TB per success}) and the lowest latency
(\textbf{1.290~s per success}), while \textsc{LLM\_summary},
\textsc{Structured\_state}, and \textsc{RAG\_memory} all remain substantially
more expensive. Overall, the forced-compaction results demonstrate that ARC's
HBM efficiency is robust not only to the recall budget, but also to the
frequency with which compaction is triggered.

\begin{table*}[!htbp]
\centering
\small
\setlength{\tabcolsep}{5pt}
\begin{tabular}{l|l|crrrrrrr}
\toprule
\rowcolor{gray!12}
\textbf{Method} & \textbf{FORCE} & $\mathbf{n}_{\textbf{succ}}$ $\uparrow$ & $\mathbf{t}_{\textbf{tot}}$ $\downarrow$ & $\mathbf{t}_{\textbf{dec}}$ $\downarrow$ & $\mathbf{t}_{\textbf{pre}}$ $\downarrow$ & \textbf{BW} $\downarrow$ & \textbf{Savings (\%)} $\uparrow$ & \textbf{Speedup} $\uparrow$ \\
\rowcolor{gray!12}
& & & (s) & (s) & (s) & (TB) & & ($\times$) \\
\midrule
\midrule
Sliding window   & \multirow{5}{*}{5, 10, 15} & 571 & 6.378 & 5.771 & 0.607 & 27.7007 & 0.0 & 1.00 \\
LLM summary      &                            & 733 & 9.494 & 8.952 & 0.543 & 42.9673 & -48.9 & 0.67 \\
Structured state &                            & 688 & 10.589 & 10.115 & 0.474 & 48.5509 & -66.0 & 0.60 \\
RAG memory       &                            & 788 & 6.467 & 6.025 & 0.442 & 28.9208 & -1.4 & 0.99 \\
\textbf{ARC}     &                            & \textbf{991} & \textbf{1.256} & \textbf{1.014} & \textbf{0.241} & \textbf{4.8694} & \textbf{80.3} & \textbf{5.08} \\
\cmidrule(l){1-9}
Sliding window   & \multirow{5}{*}{5}          & 666 & 5.174 & 4.613 & 0.561 & 22.1434 & 0.0 & 1.00 \\
LLM summary      &                            & 763 & 8.617 & 7.996 & 0.621 & 38.3788 & -66.5 & 0.60 \\
Structured state &                            & 769 & 10.431 & 9.928 & 0.504 & 47.6524 & -101.6 & 0.50 \\
RAG memory       &                            & 799 & 7.397 & 6.926 & 0.472 & 33.2431 & -43.0 & 0.70 \\
\textbf{ARC}     &                            & \textbf{998} & \textbf{1.314} & \textbf{1.066} & \textbf{0.248} & \textbf{5.1184} & \textbf{74.6} & \textbf{3.94} \\
\cmidrule(l){1-9}
Sliding window   & \multirow{5}{*}{15}         & 673 & 5.701 & 4.922 & 0.778 & 23.6278 & 0.0 & 1.00 \\
LLM summary      &                            & 765 & 9.470 & 8.667 & 0.803 & 41.6012 & -66.1 & 0.60 \\
Structured state &                            & 798 & 8.948 & 8.185 & 0.763 & 39.2882 & -57.0 & 0.64 \\
RAG memory       &                            & 790 & 6.856 & 6.327 & 0.529 & 30.3697 & -20.3 & 0.83 \\
\textbf{ARC}     &                            & \textbf{997} & \textbf{1.709} & \textbf{1.421} & \textbf{0.288} & \textbf{6.8212} & \textbf{70.0} & \textbf{3.34} \\
\cmidrule(l){1-9}
Sliding window   & \multirow{5}{*}{\parbox{2cm}{3, 6, 9,\\12, 15}} & 189 & 2.200 & 2.115 & 0.085 & 10.1509 & 0.0 & 1.00 \\
LLM summary      &                            & 876 & 4.944 & 4.508 & 0.436 & 21.6371 & -124.7 & 0.45 \\
Structured state &                            & 652 & 10.782 & 10.283 & 0.499 & 49.3566 & -390.1 & 0.20 \\
RAG memory       &                            & 799 & 9.090 & 8.688 & 0.402 & 41.7001 & -313.2 & 0.24 \\
\textbf{ARC}     &                            & \textbf{989} & \textbf{1.290} & \textbf{1.034} & \textbf{0.256} & \textbf{4.9618} & \textbf{41.4} & \textbf{1.71} \\
\bottomrule
\end{tabular}
\caption{HBM evaluation per successful trajectory for different methods on Qwen3-8B (16k context window, 4096 max-tokens) under varying FORCE configurations. The baseline for savings and speedup computations is the \textsc{Sliding\_window} method within each respective configuration block.}
\label{tab:ablation_HBM_force_needle}
\end{table*}

\subsubsection*{HBM for the Needle-in-a-Haystack Ablation: Fixture size ($\mathit{noise\_lines\_per\_side}$)}

The fixture-size ablation (Table~\ref{tab:ablation_HBM_noise_needle}) shows that \textbf{ARC remains the most HBM-efficient
method as the distractor observation becomes larger}. Across all four settings
(\texttt{5}, \texttt{10}, \texttt{25}, and \texttt{30} noise lines per side), ARC achieves the lowest total
time, the lowest decode time, and the smallest HBM traffic, while also
maintaining the highest success count. Relative to \textsc{Sliding\_window}, ARC reduces
bandwidth from \textbf{30.40~TB} to \textbf{3.07~TB} (\textbf{88.8\%} savings) at fixture size
\texttt{5}, from \textbf{25.93~TB} to \textbf{3.45~TB} (\textbf{84.9\%} savings) at \texttt{10}, from
\textbf{27.70~TB} to \textbf{4.87~TB} (\textbf{80.3\%} savings) at \texttt{25}, and from
\textbf{26.71~TB} to \textbf{5.09~TB} (\textbf{78.5\%} savings) at \texttt{30}.

These HBM savings translate into large roofline speedups throughout the sweep:
ARC improves per-success total time from \textbf{6.845~s} to \textbf{0.769~s}
(\textbf{8.90$\times$}) at fixture size \texttt{5}, from \textbf{5.871~s} to \textbf{0.885~s} (\textbf{6.63$\times$})
at \texttt{10}, from \textbf{6.378~s} to \textbf{1.256~s} (\textbf{5.08$\times$}) at \texttt{25}, and from
\textbf{6.221~s} to \textbf{1.339~s} (\textbf{4.65$\times$}) at \texttt{30}. The absolute ARC bandwidth and
latency do increase slightly as the fixture grows, which is expected because
larger distractor observations create longer live contexts and therefore more
KV-cache traffic. However, this degradation is much smaller for ARC than for
the baselines, indicating that citation-based compaction continues to suppress
HBM growth even when the retrieved observation itself becomes more expensive.

Among the non-ARC methods, \textsc{RAG\_memory} is again the closest competitor at the
smallest fixture sizes, with \textbf{17.2\%} savings at \texttt{5} and \textbf{34.4\%} at \texttt{10},
but it loses this advantage as the fixture grows and becomes worse than \textsc{Sliding\_window} at \texttt{25} and \texttt{30}. \textsc{LLM\_summary} and \textsc{Structured\_state} are
consistently above the \textsc{Sliding\_window} baseline in both bandwidth and total
time. Overall, the fixture-size sweep shows that ARC's HBM advantage is robust
not only to recall budget and compaction schedule, but also to the size of the
needle-containing observation that must be managed.

\begin{table*}[!htbp]
\centering
\small
\setlength{\tabcolsep}{5pt}
\begin{tabular}{l|l|crrrrrrr}
\toprule
\rowcolor{gray!12}
\textbf{Method} & \textbf{Noise Lines} & $\mathbf{n}_{\textbf{succ}}$ $\uparrow$ & $\mathbf{t}_{\textbf{tot}}$ $\downarrow$ & $\mathbf{t}_{\textbf{dec}}$ $\downarrow$ & $\mathbf{t}_{\textbf{pre}}$ $\downarrow$ & \textbf{BW} $\downarrow$ & \textbf{Savings (\%)} $\uparrow$ & \textbf{Speedup} $\uparrow$ \\
\rowcolor{gray!12}
& \textbf{(per side)} & & (s) & (s) & (s) & (TB) & & ($\times$) \\
\midrule
\midrule
Sliding window   & \multirow{5}{*}{5}  & 522 & 6.845 & 6.333 & 0.512 & 30.3991 & 0.0 & 1.00 \\
LLM summary      &                     & 738 & 8.824 & 8.438 & 0.386 & 40.5024 & -28.9 & 0.78 \\
Structured state &                     & 676 & 10.164 & 9.761 & 0.403 & 46.8541 & -48.5 & 0.67 \\
RAG memory       &                     & 797 & 5.666 & 5.361 & 0.305 & 25.7347 & 17.2 & 1.21 \\
\textbf{ARC}     &                     & \textbf{999} & \textbf{0.769} & \textbf{0.639} & \textbf{0.130} & \textbf{3.0658} & \textbf{88.8} & \textbf{8.90} \\
\cmidrule(l){1-9}
Sliding window   & \multirow{5}{*}{10} & 603 & 5.871 & 5.402 & 0.469 & 25.9292 & 0.0 & 1.00 \\
LLM summary      &                     & 790 & 7.894 & 7.516 & 0.378 & 36.0761 & -34.4 & 0.74 \\
Structured state &                     & 743 & 8.864 & 8.494 & 0.370 & 40.7730 & -51.0 & 0.66 \\
RAG memory       &                     & 872 & 3.852 & 3.599 & 0.253 & 17.2744 & 34.4 & 1.52 \\
\textbf{ARC}     &                     & \textbf{1000} & \textbf{0.885} & \textbf{0.718} & \textbf{0.167} & \textbf{3.4457} & \textbf{84.9} & \textbf{6.63} \\
\cmidrule(l){1-9}
Sliding window   & \multirow{5}{*}{25} & 571 & 6.378 & 5.771 & 0.607 & 27.7007 & 0.0 & 1.00 \\
LLM summary      &                     & 733 & 9.494 & 8.952 & 0.543 & 42.9673 & -48.9 & 0.67 \\
Structured state &                     & 688 & 10.589 & 10.115 & 0.474 & 48.5509 & -66.0 & 0.60 \\
RAG memory       &                     & 788 & 6.467 & 6.025 & 0.442 & 28.9208 & -1.4 & 0.99 \\
\textbf{ARC}     &                     & \textbf{991} & \textbf{1.256} & \textbf{1.014} & \textbf{0.241} & \textbf{4.8694} & \textbf{80.3} & \textbf{5.08} \\
\cmidrule(l){1-9}
Sliding window   & \multirow{5}{*}{30} & 631 & 6.221 & 5.566 & 0.655 & 26.7149 & 0.0 & 1.00 \\
LLM summary      &                     & 750 & 10.109 & 9.523 & 0.586 & 45.7103 & -62.5 & 0.62 \\
Structured state &                     & 731 & 10.083 & 9.615 & 0.467 & 46.1528 & -62.1 & 0.62 \\
RAG memory       &                     & 755 & 7.805 & 7.292 & 0.513 & 35.0015 & -25.5 & 0.80 \\
\textbf{ARC}     &                     & \textbf{992} & \textbf{1.339} & \textbf{1.060} & \textbf{0.279} & \textbf{5.0904} & \textbf{78.5} & \textbf{4.65} \\
\bottomrule
\end{tabular}
\caption{Evaluation results per successful trajectory for various memory mechanisms on Qwen3-8B (16k context window, 4096 max-tokens) under different structural noise conditions (\texttt{NOISE\_LINES\_PER\_SIDE}).}
\label{tab:ablation_HBM_noise_needle}
\end{table*}

%% file: cross_platform.tex
\section{Cross-Platform Validation: AMD MI300X}
\label{sec:cross-platform}

To verify that ARC's advantage is not an artifact of a single vendor's
hardware or numerical stack, we replicate the Needle-in-a-Haystack
evaluation from the main paper on AMD Instinct MI300X GPUs (192\,GB HBM3,
running ROCm) using the identical experimental protocol: Qwen3-8B at a
16k context window and Qwen3-32B at a 32k context window, both under a
4,096-token completion cap, with the same three seeds (42, 82, 122).

\subsection{Accuracy}

Table~\ref{tab:supp-amd-accuracy} reports mean accuracy across the three
seeds on AMD MI300X, directly comparable to Table~2 in the main paper
(NVIDIA H200).

\begin{table}[t]
\centering
\setlength{\tabcolsep}{4pt}
\begin{tabular}{llrrr}
\toprule
\rowcolor{gray!12}
Method & Pass/Att. & Mean Acc$\uparrow$ & SD Acc \\
\midrule
\midrule
\rowcolor{blue!20}
\multicolumn{4}{l}{\textit{Qwen3-8B}} \\
Full\_context      & 1592/3000 & 53.07\% & 1.45\% \\
Sliding\_window                                & 2055/3000 & 68.50\% & 0.35\% \\
LLM\_summary                                   & 2507/3000 & 83.57\% & 1.99\% \\
Structured\_state                              & 2355/3000 & 78.50\% & 0.89\% \\
RAG\_memory                                    & 2431/3000 & 81.03\% & 1.46\% \\
\textbf{ARC}                                   & \textbf{2987/3000} & \textbf{99.57\%} & 0.35\% \\
\midrule
\rowcolor{blue!20}
\multicolumn{4}{l}{\textit{Qwen3-32B}} \\
Full\_context      &  118/3000 &  3.93\% & 1.08\% \\
Sliding\_window                                & 2863/3000 & 95.43\% & 0.35\% \\
LLM\_summary                                   & 2815/3000 & 93.83\% & 0.68\% \\
Structured\_state                              & 2882/3000 & 96.07\% & 0.59\% \\
RAG\_memory                                    & 2925/3000 & 97.50\% & 0.17\% \\
\textbf{ARC}                                   & \textbf{2993/3000} & \textbf{99.77\%} & 0.23\% \\
\bottomrule
\end{tabular}
\caption{Needle-in-a-Haystack accuracy on AMD MI300X, 3 seeds (42, 82,
122) of 1000 tasks each (3000 attempts/method). Directly comparable to
Table~2 (NVIDIA H200) in the main paper.}
\label{tab:supp-amd-accuracy}
\end{table}

ARC's accuracy is essentially unchanged across vendors: 99.00\%
(H200) versus 99.57\% (MI300X) on Qwen3-8B, and 99.80\% versus 99.77\%
on Qwen3-32B, in both cases within the seed-to-seed noise already
present on a single platform. This indicates that ARC's recovery
guarantee holds independent of the underlying accelerator.

The baseline methods show a different pattern. At Qwen3-8B, every
baseline scores substantially higher on MI300X than on H200
(e.g., Full\_context rises from 36.80\% to 53.07\%, LLM\_summary from
71.77\% to 83.57\%). At Qwen3-32B, this gap largely disappears: every
method, baselines included, lands within roughly one point of its H200
counterpart. Because ARC's own accuracy is stable across both
configurations, this discrepancy affects only the comparison among
lossy baselines and does not bear on ARC's central claim.

\subsection{Hardware-Cost Model}

We additionally replay each method's recorded trajectory through the
same roofline hardware-cost model used in the main paper
(Eq.~1--2), substituting AMD MI300X specifications (192\,GB HBM3,
5.3\,TB/s peak bandwidth, 1300\,TFLOPS BF16 dense compute) for the
NVIDIA H200 figures. Following the same
per-success normalization as the main paper, and at the request of a
collaborator to compare against a compaction baseline rather than the
non-compacting control, we report savings relative to
\emph{Sliding\_window} rather than \emph{Full\_context}.

\begin{table}[t]
\centering
\setlength{\tabcolsep}{4pt}
\begin{tabular}{lrrrr}
\toprule
\rowcolor{gray!12}
Method & t\_tot.(s)$\downarrow$ & bw(TB)$\downarrow$ & Sav.(\%)$\uparrow$ \\
\midrule
\midrule
\rowcolor{blue!20}
\multicolumn{4}{l}{\textit{Qwen3-8B}} \\
Sliding\_window   &  4.942 & 23.92 &  0.0 \\
LLM\_summary      &  6.293 & 31.45 & -27.3 \\
Structured\_state &  7.853 & 39.99 & -58.9 \\
RAG\_memory       &  5.134 & 25.56 & -3.9 \\
\textbf{ARC}      &  \textbf{1.009} &  \textbf{4.38} & \textbf{79.6} \\
\midrule
\rowcolor{blue!20}
\multicolumn{4}{l}{\textit{Qwen3-32B}} \\
Sliding\_window   &  6.010 & 26.79 &  0.0 \\
LLM\_summary      &  3.859 & 16.83 & 35.8 \\
Structured\_state &  2.079 &  8.18 & 65.4 \\
RAG\_memory       &  3.462 & 15.29 & 42.4 \\
\textbf{ARC}      &  \textbf{1.572} &  \textbf{4.32} & \textbf{73.8} \\
\bottomrule
\end{tabular}
\caption{High-bandwidth-memory comparison on AMD MI300X, Needle-in-a-Haystack,
same 4,096-token completion cap as the main paper (seed 42). Savings reported
relative to Sliding\_window, per the request of a project collaborator to
compare against a compaction-based baseline rather than the non-compacting
control.}
\label{tab:supp-amd-hbm}
\end{table}

ARC is the fastest and lowest-bandwidth method at both scales. On
Qwen3-8B, ARC completes each success in 1.01\,s at 4.38\,TB of HBM
traffic, a 79.6\% time saving relative to Sliding\_window; every other
lossy baseline is in fact \emph{slower} than Sliding\_window at this
scale (Sav.\ is negative for LLM\_summary, Structured\_state, and
RAG\_memory), so ARC is the only method that both compacts and
is faster than the naive windowing baseline it is compared against. On
Qwen3-32B, ARC again leads on both time (1.57\,s) and bandwidth
(4.32\,TB), though Structured\_state narrows the gap on this larger
model (65.4\% savings versus ARC's 73.8\%).

\subsection{Summary}

These results indicate that ARC's core contribution, exact,
addressable recovery of compacted content, transfers across accelerator
vendors without modification. The magnitude of ARC's efficiency
advantage over lossy baselines is consistent with the main paper's
NVIDIA findings, supporting the claim that ARC's benefits stem from its
recall mechanism rather than from vendor-specific kernel behavior.